\title{Sqrt(d) Dimension Dependence of Langevin Monte Carlo}
\author{%
  Ruilin Li\\
  Georgia Institute of Technology\\
  \texttt{ruilin.li@gatech.edu} \\
  \And
   Hongyuan Zha \\
   School of Data Science\\
   Shenzhen Institute of Artificial Intelligence and Robotics for Society\\ The Chinese University of Hong Kong, Shenzhen \\
   \texttt{zhahy@cuhk.edu.cn} \\
   \AND
   Molei Tao \\
   Georgia Institute of Technology \\
   \texttt{mtao@gatech.edu} \\
}
\DeclarePairedDelimiterX{\innerprod}[2]{\langle}{\rangle}{#1, #2}
\DeclarePairedDelimiter{\ceil}{\lceil}{\rceil}
\newcommand{\bs}{\boldsymbol}
\newcommand{\innerprodA}[2]{\langle #1, #2 \rangle_A}
\newcommand{\Clmc}{C_\text{LMC}}
\newtheorem{theorem}{Theorem}[section]
\newtheorem{lemma}[theorem]{Lemma}
\newtheorem{definition}[theorem]{Definition}
\newtheorem{corollary}[theorem]{Corollary}
\newtheorem{assumption}{A}
\newtheorem*{remark}{Remark}
\renewcommand{\O}[1]{\mathcal{O}(#1)}
\newcommand{\tildeO}[1]{\widetilde{\mathcal{O}}\left(#1\right)}
\renewcommand{\norm}[1]{\left\| #1 \right\|}
\newcommand*\circled[1]{\tikz[baseline=(char.base)]{\node[shape=circle,draw,inner sep=2pt] (char) {#1};}}
\begin{document}

\maketitle

\begin{abstract}
This article considers the popular MCMC method of unadjusted Langevin Monte Carlo (LMC) and provides a non-asymptotic analysis of its sampling error in 2-Wasserstein distance. The proof is based on a refinement of mean-square analysis in \citet{li2019stochastic}, and this refined framework automates the analysis of a large class of sampling algorithms based on discretizations of contractive SDEs. Using this framework, we establish an $\tildeO{\nicefrac{\sqrt{d}}{\epsilon}}$ mixing time bound for LMC, without warm start, under the common log-smooth and log-strongly-convex conditions, plus a growth condition on the 3rd-order derivative of the potential of target measures. This bound improves the best previously known $\tildeO{\nicefrac{d}{\epsilon}}$ result and is optimal (in terms of order) in both dimension $d$ and accuracy tolerance $\epsilon$ for target measures satisfying the aforementioned assumptions. Our theoretical analysis is further validated by numerical experiments.
\end{abstract}

\section{Introduction}
\vspace{-6pt}

The problem of sampling statistical distributions has attracted considerable attention, not only in the fields of statistics and scientific computing, but also in machine learning \citep{robert2013monte, andrieu2003introduction, liu2008monte}; for example, how various sampling algorithms scale with the dimension of the target distribution is a popular recent topic in statistical deep learning (see discussions below for references). For samplers that can be viewed as discretizations of SDEs, the idea is to use an ergodic SDE whose equilibrium distribution agrees with the target distribution, and employ an appropriate numerical algorithm that discretizes (the time of) the SDE. The iterates of the numerical algorithm will approximately follow the target distribution when converged, and can be used for various downstream applications such as Bayesian inference and inverse problem \citep{Dashti2017}. One notable example is the Langevin Monte Carlo algorithm (LMC), which corresponds to an Euler-Maruyama discretization of the overdamped Langevin equation. Its study dated back to at least the 90s \citep{roberts1996exponential} but keeps on leading to important discoveries, for example, on non-asymptotics and dimension dependence, which are relevant to machine learning (e.g., \citet{dalalyan2017further, dalalyan2017theoretical, cheng2018sharp, durmus2019analysis, durmus2019high, vempala2019rapid, dalalyan2018sampling, li2019stochastic, erdogdu2020convergence, mou2019improved,lehec2021langevin}). LMC is closely related to SGD too (e.g., \citet{stephan2017stochastic}). Many other examples exist, based on alternative SDEs and/or different discretizations (e.g.,
\citet{dalalyan2018sampling, ma2019there}; 
\citet{mou2021high,li2020hessian};
\citet{roberts1998optimal,chewi2020optimal}; 
\citet{shen2019randomized}).
%
%


Quantitatively characterizing the non-asymptotic sampling error of numerical algorithms is usually critical for choosing the appropriate algorithm for a specific downstream application, for providing practical guidance on hyperparameter selection and experiment design, and for designing improved samplers. A powerful tool that dates back to \citep{jordan1998variational} is a paradigm of non-asymptotic error analysis, namely to view sampling as optimization in probability space, and it led to many important recent results (e.g., \cite{liu2016stein, dalalyan2017further, wibisono2018sampling, zhang2018policy, frogner2018approximate, chizat2018global, chen2018unified, ma2019there, erdogdu2020convergence}). It works by choosing an objective functional, typically some statistical distances/divergences, and showing that the law of the iterates of sampling algorithms converges in that objective functional. However, the choice of the objective functional often needs to be customized for different sampling algorithms. For example, KL divergence works for LMC \citep{cheng2018convergence}, but a carefully hand-crafted cross term needs to be added to KL divergence for analyzing KLMC \citep{ma2019there}. Even for the same underlying SDE, different discretization schemes exist and lead to different sampling algorithms, and the analyses of them had usually been case by case (e.g., \citet{cheng2017underdamped, dalalyan2018sampling, shen2019randomized}). Therefore, it would be a desirable complement to have a unified, general framework to study the non-asymptotic error of SDE-based sampling algorithms. Toward this goal, an alternative approach to analysis has recently started attracting attention, namely to resort to the numerical analysis of SDE integrators (e.g., \citet{milstein2013stochastic, kloeden1992stochastic}) and quantitatively connect the integration error to the sampling error. One remarkable work in this direction is \cite{li2019stochastic}, which will be discussed in greater details later on.

The main tool of analysis in this paper will be a strengthened version (in specific aspects that will be clarified soon) of the result in \cite{li2019stochastic}. Although this analysis framework is rather general and applicable to a broad family of numerical methods that discretize contractive\footnote{possibly after a coordinate transformation} SDEs, the main innovation focuses on a specific sampling algorithm, namely LMC, which is widely used in practice. Its stochastic gradient version is implemented in common machine learning systems, such as Tensorflow \citep{abadi2016tensorflow}, and is the off-the-shelf algorithm for large scale Bayesian inference. With the ever-growing size of parameter space, the non-asymptotic error of LMC is of central theoretical and practical interest, in particular, its dependence on the dimension of the sample space. The best current known upper bound of the mixing time in 2-Wasserstein distance for LMC is $\tildeO{\frac{d}{\epsilon}}$ \citep{durmus2019high}. Motivated by a recent result \citep{chewi2020optimal} that shows better dimension dependence for a Metropolis-Adjusted improvement of LMC, we will investigate if the current bound for (unadjusted) LMC is tight, and if not, what is the optimal dimension dependence.



\vspace{-6pt}
\paragraph{Our contributions} ~
\vspace{-6pt}


The main contribution of this work is an improved $\tildeO{ \frac{\sqrt{d}}{\epsilon}}$ mixing time upper bound for LMC in 2-Wasserstein distance, under reasonable regularity assumptions. More specifically, we study LMC for sampling from a Gibbs distribution $d\mu \propto \exp\left( -f(\bs{x})\right) d\bs{x}$. Under the standard smoothness and strong-convexity assumptions, plus an additional linear growth condition on the third-order derivative of the potential (which also shares connections to popular assumptions in the frontier literature), our bound improves upon the previously best known $\tildeO{ \frac{d}{\epsilon} }$ result \citep{durmus2019high} in terms of dimension dependence. For a comparison, note it was known that discretized \textbf{kinetic} Langevin dynamics can lead to $\sqrt{d}$ dependence on dimension \citep{cheng2018convergence, dalalyan2018sampling} and some believe that it is the introduction of momentum that improves the dimension dependence, but our result shows that discretized overdamped Langevin (no momentum) can also have mixing time scaling like $\sqrt{d}$. In fact, it is important to mention that recently shown was that Metropolis-Adjusted Euler-Maruyama discretization of \textbf{overdamped} Langevin (i.e., MALA) has an optimal dimension dependence of $\tildeO{\sqrt{d}}$ \citep{chewi2020optimal}, while what we analyze here is the \textbf{unadjusted} version (i.e., LMC), and it has the same dimension dependence (note however that our $\epsilon$ dependence is not as good as that for MALA; more discussion in Section \ref{sec:LMC}). We also constructed an example which shows that the mixing time of LMC is at least $\widetilde{\Omega}\left(\frac{\sqrt{d}}{\epsilon}\right)$. Hence, our mixing time bound has the optimal dependence on both $d$ and $\epsilon$, in terms of order, for the family of target measures satisfying those regularity assumptions. Our theoretical analysis is further validated by empirical investigation of numerical examples.

A minor contribution of this work is the error analysis framework that we use. It is based on the classical mean-square analysis \citep{milstein2013stochastic} in numerical SDE literature, however extended from finite time to infinite time. It is a minor contribution because this extension was already pioneered in the milestone work of \cite{li2019stochastic}, although we will develop a refined version. Same as in classical mean-square analysis and in \cite{li2019stochastic}, the final (sampling in this case) error is only half order lower than the order of local strong integration error ($p_2$). This will lead to a $\tildeO{ C^{\frac{1}{p_2 - \frac{1}{2}}} / \epsilon^{\frac{1}{p_2 - \frac{1}{2}}}}$ mixing time upper bound in 2-Wasserstein distance for the family of algorithms, where $C$ is a constant containing various information of the underlying problem, e.g., the dimension $d$. Nevertheless, the following two are \textbf{new} to this paper: (i) We weakened the requirement on local strong and weak errors. More precisely, \cite{li2019stochastic} requires uniform bounds on local errors, but this could be a nontrivial requirement for SDE integrators; the improvement here only requires non-uniform bounds (although establishing the same result consequently needs notably more efforts, these are included in this paper too). (ii) The detailed expressions of our bounds are not the same as those in \cite{li2019stochastic} (even if local errors could be uniformly bounded), and as we are interested in dimension-dependence of LMC, we work out constants and carefully track their dimension-dependences. Bounds and constants in \cite{li2019stochastic} might not be specifically designed for tightly tracking dimension dependences, as the focus of their seminal paper was more on $\epsilon$ dependence; consequently, its general error bound only led to a $\tilde{O}(d)$-dependence in mixing time when applied to LMC (see Example 1 in \cite{li2019stochastic}), whereas our result leads to $\tilde{O}(\sqrt{d})$.


\section{Preliminaries}\label{sec:preliminaries}
\vspace{-6pt}


\paragraph{Notation} Use symbol $\bs{x}$ to denote a $d$-dim. vector, and plain symbol $x$ to denote a scalar variable. $\norm{\bs{x}}$ denotes the Euclidean vector norm. A numerical algorithm is denoted by $\mathcal{A}$ and its $k$-th iterate by $\bar{\bs{x}}_k$. Slightly abuse notation by identifying measures with their density function w.r.t. Lebesgue measure. Use the convention $\tildeO{\cdot} = \O{\cdot} \log^{\O{1}} (\cdot)$ and $\widetilde{\Omega} \left(\cdot\right) = \Omega(\cdot) \log^{\O{1}} (\cdot)$, i.e., the $\tildeO{\cdot}$/$\widetilde{\Omega}(\cdot)$ notation ignores the dependence on logarithmic factors. Use the notation $\widetilde{\Omega}(\cdot)$ similarly. Denote 2-Wasserstein distance by $W_2(\mu_1, \mu_2) = \left(\inf_{(\bs{X}, \bs{Y}) \sim \Pi(\mu_1, \mu_2)} \mathbb{E}\norm{\bs{X} - \bs{Y}}^2\right)^\frac{1}{2}$, where $\Pi(\mu_1, \mu_2)$ is the set of couplings, i.e. all joint measures with $X$ and $Y$ marginals being $\mu_1$ and $\mu_2$. Denote the target distribution by $\mu$ and the law of a random variable $\bs{X}$ by $\mathrm{Law}(\bs{X})$. Finally, denote the mixing time of an sampling algorithm $\mathcal{A}$ converging to its target distribution $\mu$ in 2-Wasserstein distance by 
$
\tau_{\mathrm{mix}}(\epsilon; W_2; \mathcal{A}) = \inf\{ k \ge 0 | W_2(\mathrm{Law}(\bar{\bs{x}}_k), \mu) \le \epsilon\}.
$

\paragraph{SDE for Sampling}
Consider a general SDE
\begin{equation}\label{eq:sde}
    d\bs{x}_t = \bs{b}(t, \bs{x}_t) dt + \bs{\sigma}(t, \bs{x}_t) d\bs{B}_t
\end{equation}
where $\bs{b} \in \mathbb{R}^d$ is a drift term, $\bs{\sigma} \in \mathbb{R}^{ d \times l}$ is a diffusion coefficient matrix and $\bs{B}_t$ is a $l$-dimensional Wiener process. Under mild condition \citep[Theorem 3.1]{pavliotis2014stochastic}, there exists a unique strong solution $\bs{x}_t$ to Eq. \eqref{eq:sde}. Some SDEs admit geometric ergodicity, so that their solutions converge exponentially fast to a unique invariant distribution, and examples include the classical overdamped and kinetic Langevin dynamics, but are not limited to those (e.g., \cite{mou2021high,li2020hessian}). Such SDE are desired for sampling purposes, because one can set the target distribution to be the invariant distribution by choosing an SDE with an appropriate potential, and then solve the solution $\bs{x}_t$ of the SDE and push the time $t$ to infinity, so that (approximate) samples of the target distribution can be obtained.  Except for a few known cases, however, explicit solutions of Eq. \eqref{eq:sde} are elusive and we have to resort to numerical schemes to simulate/integrate SDE. Such example schemes include, but are not limited to Euler-Maruyama method, Milstein methods and Runge-Kutta method (e.g., \cite{kloeden1992stochastic, milstein2013stochastic}). With constant stepsize $h$ and at $k$-th iteration, a typical numerical algorithm takes a previous iterate $\bar{\bs{x}}_{k-1}$ and outputs a new iterate $\bar{\bs{x}}_k$ as an approximation of the solution $\bs{x}_t$ of Eq. \eqref{eq:sde} at time $t = kh$. 

\paragraph{Langevin Monte Carlo Algorithm}
LMC algorithm is defined by the following update rule
\begin{equation}\label{eq:LMC}
    \bar{\bs{x}}_k = \bar{\bs{x}}_{k-1} - h \nabla f(\bar{\bs{x}}_{k-1}) + \sqrt{2h} \bs{\xi}_k, \quad k = 1, 2, \cdots
\end{equation}
where $\left\{\bs{\xi}_k\right\}_{k\in \mathbb{Z}_{>0}}$ are i.i.d. standard $d$-dimensional Gaussian vectors. LMC corresponds to an Euler-Maruyama discretization of the continuous overdamped Langevin dynamics $d\bs{x}_t = -\nabla f(\bs{x}_t) dt + \sqrt{2}d\bs{B}_t$, which converges to an equilibrium distribution $\mu \sim \exp(-f(\bs{x}))$.

\cite{dalalyan2017theoretical} provided a non-asymptotic analysis of LMC. An $\tildeO{\frac{d}{\epsilon^2}}$ mixing time bound in $W_2$ for log-smooth and log-strongly-convex target measures \citep{dalalyan2017further, cheng2018sharp, durmus2019analysis} has been established. It was further improved to $\tildeO{\frac{d}{\epsilon}}$ under an additional Hessian Lipschitz condition \citep{durmus2019high}. Mixing time bounds of LMC in other statistical distances/divergences have also been studied, including total variation distance \citep{dalalyan2017theoretical, durmus2017nonasymptotic} and KL divergence \citep{cheng2018convergence}.

\paragraph{Classical Mean-Square Analysis}
A powerful framework for quantifying the \textit{global} discretization error of a numerical algorithm for Eq. \eqref{eq:sde}, i.e.,
$
    e_k = \left\{\mathbb{E}\norm{\bs{x}_{kh} - \bar{\bs{x}}_k}\right\}^\frac{1}{2},
$
is mean-square analysis (e.g., \cite{milstein2013stochastic}). Mean-square analysis studies how \textit{local} integration error propagate and accumulate into global integration error; in particular, if one-step (local) weak error and strong error (both the exact and numerical solutions start from the same initial value $\bs{x}$) satisfy
\begin{equation}\label{eq:local_error_general_msa}
    \begin{aligned}
            \norm{\mathbb{E}\bs{x}_h - \mathbb{E}\bar{\bs{x}}_1} \le& C_1 \left( 1 + \mathbb{E}\norm{\bs{x}}^2 \right)^\frac{1}{2} h^{p_1}, \quad \mbox{(local weak error)} \\
            \left(\mathbb{E}\norm{\bs{x}_h - \bar{\bs{x}}_1}^2\right)^\frac{1}{2} \le& C_2 \left( 1 + \mathbb{E}\norm{\bs{x}}^2 \right)^\frac{1}{2} h^{p_2}, \quad \mbox{(local strong error)}
    \end{aligned}
\end{equation}
over a time interval $[0, Kh]$ for some constants $C_1, C_2 > 0$, $p_2 \ge \frac{1}{2}$ and $p_1 \ge p_2 + \frac{1}{2}$, then the global error is bounded by $
    e_k \le C \left( 1 + \mathbb{E}\norm{\bs{x}_0}^2 \right)^\frac{1}{2} h^{p_2 - \frac{1}{2}}, \, k=1,\cdots,K$ 
for some constant $C > 0$ dependent on $Kh$.

Although classical mean-square analysis is only concerned with numerical integration error, sampling error can be also inferred. However, there is a limitation that prevents its direct employment in analyzing sampling algorithms: the global error bound only holds in finite time because the constant $C$ can grow exponentially as $K$ increases, rendering the bound useless when $K \to \infty$. 


\section{Mean-Square Analysis of Samplers Based on Contractive SDE}\label{sec:MSA}
\vspace{-6pt}
We now review and present some improved results on how to use mean-square analysis of \textbf{integration} error to quantify \textbf{sampling} error. A seminal paper in this direction is \cite{li2019stochastic}. What is known / new will be clarified. In all cases, the first step is to lift the finite time limitation when the SDE being discretized has some decaying property so that local integration errors do not amplify with time.



The specific type of decaying property we will work with is contractivity (after coordinate transformation). It is a sufficient condition for the underlying SDE to converge to a statistical distribution.
\begin{definition} A stochastic differential equation is contractive if there exists a non-singular constant matrix $A \in \mathbb{R}^{d\times d}$, a constant $\beta > 0$, such that any pair of solutions of the SDE satisfy
\begin{equation}\label{eq:contraction_def}
    \left(\mathbb{E} \norm{A\left(\bs{x}_t - \bs{y}_t\right)}^2\right)^\frac{1}{2} \le \left(\mathbb{E}\norm{A\left(\bs{x} - \bs{y}\right)}^2\right)^\frac{1}{2} \exp(-\beta t),
\end{equation}
where $\bs{x}_t, \bs{y}_t$ are two solutions, driven by the same Brownian motion but evolved respectively from initial conditions $\bs{x}$ and $\bs{y}$. 
\end{definition}
\begin{remark}
As long as $\bs{b}$ and $\bs{\sigma}$ in \eqref{eq:sde} are not explicitly dependent on time, it suffices to find an arbitrarily small $t_0>0$ and show \eqref{eq:contraction_def} holds for all $t<t_0$.
\end{remark}
\begin{remark}
Sometimes contraction is not easy to establish directly, but can be shown after an appropriate coordinate transformation, see \cite[Proposition 1]{dalalyan2018sampling} for such a treatment for kinetic Langevin dynamics. The introduction of $A$ permits such transformations.
\end{remark}
In particular, overdamped Langevin dynamics, of which LMC is a discretization, is contractive when $f$ is strongly convex and smooth.


We now use contractivity to remove the finite time limitation. We first need a short time lemma.
\begin{lemma}\cite[Lemma 1.3]{milstein2013stochastic}
Suppose $\bs{b}$ and $\bs{\sigma}$ in Eq.\eqref{eq:sde} are Lipschitz continuous. For two solutions $\bs{x}_t, \bs{y}_t$ of Eq. \eqref{eq:sde} starting from $\bs{x}, \bs{y}$ respectively, denote $\bs{z}_t(\bs{x}, \bs{y}) := (\bs{x}_t - \bs{x}) - (\bs{y}_t - \bs{y})$,  then there exist $C_0 > 0$ and $h_0 > 0$ such that
\begin{equation}\label{eq:z}
    \mathbb{E}\norm{\bs{z}_t(\bs{x}, \bs{y})}^2 \le C_0 \norm{\bs{x} - \bs{y}}^2 t, \quad \forall \bs{x},  \bs{y}, \, 0 < t \le h_0.
\end{equation}
\end{lemma}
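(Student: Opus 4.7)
The plan is to derive the bound via the integral form of the SDE, using Lipschitz continuity of $\bs{b}$ and $\bs{\sigma}$ together with standard It\^o calculus estimates, and then observe that the dominant contribution for small $t$ comes from the diffusion term, which naturally produces the linear-in-$t$ growth.

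Concretely, I would start by writing
\begin{equation*}
    \bs{z}_t(\bs{x},\bs{y}) = \int_0^t \bigl[\bs{b}(s,\bs{x}_s) - \bs{b}(s,\bs{y}_s)\bigr] ds + \int_0^t \bigl[\bs{\sigma}(s,\bs{x}_s) - \bs{\sigma}(s,\bs{y}_s)\bigr] d\bs{B}_s,
\end{equation*}
which follows by subtracting the integral representations of $\bs{x}_t$ and $\bs{y}_t$ and cancelling the initial values. Applying $\|a+b\|^2 \le 2\|a\|^2 + 2\|b\|^2$, then Cauchy--Schwarz on the drift term and the It\^o isometry on the stochastic term, I get
\begin{equation*}
    \mathbb{E}\norm{\bs{z}_t}^2 \le 2t \int_0^t \mathbb{E}\norm{\bs{b}(s,\bs{x}_s) - \bs{b}(s,\bs{y}_s)}^2 ds + 2 \int_0^t \mathbb{E}\norm{\bs{\sigma}(s,\bs{x}_s) - \bs{\sigma}(s,\bs{y}_s)}_F^2 ds.
\end{equation*}
The Lipschitz hypothesis on $\bs{b}$ and $\bs{\sigma}$ (with constants $L_b, L_\sigma$) then reduces both integrands to a multiple of $\mathbb{E}\norm{\bs{x}_s - \bs{y}_s}^2$.

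Next I would invoke the standard well-posedness estimate for Lipschitz SDEs (itself a Gr\"onwall argument applied to $\mathbb{E}\norm{\bs{x}_s - \bs{y}_s}^2$), which yields $\mathbb{E}\norm{\bs{x}_s - \bs{y}_s}^2 \le \norm{\bs{x} - \bs{y}}^2 e^{Ks}$ for some $K$ depending only on $L_b, L_\sigma$. Substituting back and using $e^{Ks} \le e^{Kt}$ for $s \le t$ gives
\begin{equation*}
    \mathbb{E}\norm{\bs{z}_t}^2 \le \bigl(2L_b^2 t^2 + 2L_\sigma^2 t\bigr) e^{Kt} \norm{\bs{x} - \bs{y}}^2.
\end{equation*}
Choosing $h_0$ small enough that $e^{K h_0} \le 2$ and absorbing the $t^2$ term into a single $t$ via $t^2 \le h_0 t$, I obtain the desired bound with $C_0 := 2(2 L_b^2 h_0 + 2 L_\sigma^2)$.

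There is no real obstacle here; the lemma is essentially a textbook short-time mean-square stability estimate, and all inequalities used (Cauchy--Schwarz, It\^o isometry, Gr\"onwall) are standard. The only mild care is in ensuring the constant $K$ is independent of $\bs{x},\bs{y}$, which is automatic from global Lipschitzness, and in noticing that the diffusion contribution is what produces the linear $t$ scaling while the drift contribution is of order $t^2$ and therefore subdominant for $t \le h_0$.
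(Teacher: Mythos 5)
Your proof is correct and is essentially the standard textbook argument that the paper itself relies on: the paper gives no proof of this lemma, citing it directly from \cite[Lemma 1.3]{milstein2013stochastic}, and your derivation (integral representation, Cauchy--Schwarz plus It\^o isometry, the Gr\"onwall stability bound $\mathbb{E}\norm{\bs{x}_s-\bs{y}_s}^2 \le \norm{\bs{x}-\bs{y}}^2 e^{Ks}$, and absorption of the $t^2$ drift term for $t\le h_0$) is exactly the classical route to this estimate. The only point worth making explicit is that the two solutions are driven by the same Brownian motion (synchronous coupling), which you use implicitly when subtracting the stochastic integrals and which is how the lemma is applied later in the paper.
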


Then we have a sequence of results that connects statistical property 
with integration property.
We will see that a non-asymptotic \textbf{sampling} error analysis only requires bounding the orders of local weak and strong \textbf{integration} errors (if the continuous dynamics can be shown contractive).

\begin{theorem}\label{thm:global-error} (\textbf{Global Integration Error, Infinite Time Version}) Suppose Eq.\eqref{eq:sde} is contractive with rate $\beta$ and with respect to a non-singular matrix $A \in \mathbb{R}^{d \times d}$, with Lipschitz continuous $\bs{b}$ and $\bs{\sigma}$, and there is a numerical algorithm $\mathcal{A}$ with step size $h$ simulating the solution $\bs{x}_t$ of the SDE, whose iterates are denoted by $\bar{\bs{x}}_k, k=0,1,\cdots$. Suppose there exists $0 < h_0 \leq 1, C_1, C_2 > 0, D_1, D_2 \ge 0, p_1 \ge 1, \frac{1}{2} < p_2 \le p_1 - \frac{1}{2
}$ such that for any $0 < h \leq h_0$, the algorithm $\mathcal{A}$ has, respectively, local weak and strong error of order $p_1$ and $p_2$, defined as
\begin{equation}
\begin{cases}
    \norm{\mathbb{E} \left( \bs{x}_h - \bar{\bs{x}}_1 \right)} \le \left( C_1 + D_1 \sqrt{\mathbb{E}\norm{\bs{x}}^2} \right) h^{p_1}, \\
    \left(\mathbb{E}\norm{\bs{x}_h - \bar{\bs{x}}_1}^2\right)^\frac{1}{2} \le \left( C_2^2 + D_2^2 \mathbb{E}\norm{\bs{x}}^2 \right)^\frac{1}{2} h^{p_2},
\end{cases}\label{eq:local_error}
\end{equation}
where $\bs{x}_h$ solves Eq.\eqref{eq:sde} with any initial value $\bs{x}$ and $\bar{\bs{x}}_1$ is the result of applying $\mathcal{A}$ to $\bs{x}$ for one step. 

If the solution of SDE $\bs{x}_t$ and algorithm $\mathcal{A}$ both start from $\bs{x}_0$, then for $0 < h \leq h_1 \triangleq \min\left\{h_0, \frac{1}{4\beta},  \left(\frac{\sqrt{\beta}}{4\sqrt{2} \kappa_A D_2} \right)^\frac{1}{p_2 - \frac{1}{2}}, \left( \frac{\beta}{8\sqrt{2}\kappa_A^2 (D_1 + C_0 D_2)} \right)^\frac{1}{p_2 - \frac{1}{2}}    \right\}$, the global error $\bs{e}_k$ is bounded as 
\begin{equation}\label{eq:global_error}
    e_k := \left( \mathbb{E} \| \bs{x}_{kh} - \bar{\bs{x}}_k \|^2 \right)^{\frac{1}{2}} \le C h^{p_2 - \frac{1}{2}}, \quad k = 0, 1, 2, \cdots, \qquad \text{where}
\end{equation}
\begin{equation}\label{eq:constant}
C = \frac{2}{\sqrt{\beta}} \kappa_A^2 \left( \frac{C_1 + C_0 C_2 + \sqrt{2}U(D_1 + C_0 D_2) }{\sqrt{\beta}} +  C_2 + \sqrt{2} D_2U \right),
\end{equation}
$C_0$ is from Eq. \eqref{eq:z}, $\kappa_A$ is the condition number of matrix $A$ and $U^2 \triangleq 4 \mathbb{E}\norm{\bs{x}_0}^2 + 6 \mathbb{E}_\mu \norm{\bs{x}}^2$.
\end{theorem}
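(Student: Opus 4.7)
The plan is to turn the classical finite-time mean-square argument into an infinite-time statement by exploiting contractivity at every step so that accumulated errors do not amplify. Throughout, I work in the $A$-weighted inner product $\innerprodA{\bs{u}}{\bs{v}} = \innerprod{A\bs{u}}{A\bs{v}}$ and the associated norm $\norm{\bs{u}}_A := \norm{A\bs{u}}$; at the very end I convert back to the Euclidean norm, which is where one of the $\kappa_A$ factors enters.

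First I introduce, for each $k$, the auxiliary random variable $\widetilde{\bs{x}}_k := \bs{x}_h(\bar{\bs{x}}_{k-1})$, i.e.\ the exact SDE solution started from the previous numerical iterate and run for one step. Writing $\bs{\delta}_k := \bar{\bs{x}}_k - \bs{x}_{kh}$ and $\Delta_k := \bar{\bs{x}}_k - \widetilde{\bs{x}}_k$, I decompose
\[
\bs{\delta}_k \;=\; \Delta_k \;+\; (\widetilde{\bs{x}}_k - \bs{x}_{kh}),
\]
where $\Delta_k$ is the genuine one-step discretization error governed by \eqref{eq:local_error}, while $\widetilde{\bs{x}}_k - \bs{x}_{kh}$ is a difference of two \emph{exact} SDE trajectories with initial conditions differing by $\bs{\delta}_{k-1}$, so that the contractivity assumption \eqref{eq:contraction_def} applies: $\mathbb{E}\norm{\widetilde{\bs{x}}_k - \bs{x}_{kh}}_A^2 \le e^{-2\beta h}\, \mathbb{E}\norm{\bs{\delta}_{k-1}}_A^2$. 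Expanding the $A$-norm square of the decomposition, the local strong error and the operator norm of $A$ control the $\norm{\Delta_k}_A^2$ term by $\kappa_A$-factors times $(C_2^2 + D_2^2 \mathbb{E}\norm{\bar{\bs{x}}_{k-1}}^2)h^{2p_2}$.

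The main obstacle is the cross term $2\,\mathbb{E}\innerprodA{\Delta_k}{\widetilde{\bs{x}}_k - \bs{x}_{kh}}$, which is where the local \emph{weak} error and Lemma 2.1 must be brought in. I split the second factor as $\widetilde{\bs{x}}_k - \bs{x}_{kh} = \bs{\delta}_{k-1} + \bs{z}_h(\bar{\bs{x}}_{k-1},\bs{x}_{(k-1)h})$. For the $\bs{\delta}_{k-1}$ piece I condition on the past $\mathcal{F}_{k-1}$: since the fresh Brownian increment on $[(k-1)h,kh]$ is independent of $\mathcal{F}_{k-1}$, the conditional expectation $\mathbb{E}[\Delta_k\mid\mathcal{F}_{k-1}]$ reduces to the deterministic local weak error evaluated at $\bar{\bs{x}}_{k-1}$, hence is bounded by $(C_1 + D_1\norm{\bar{\bs{x}}_{k-1}}) h^{p_1}$; Cauchy--Schwarz then gives a cross term of order $h^{p_1}$ times $\norm{\bs{\delta}_{k-1}}_A$. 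For the $\bs{z}_h$ piece I use Cauchy--Schwarz with Lemma 2.1's bound $\mathbb{E}\norm{\bs{z}_h}^2 \le C_0 \norm{\bs{\delta}_{k-1}}^2 h$ together with the local strong error, yielding a term of order $h^{p_2+1/2}$ times $\norm{\bs{\delta}_{k-1}}_A$. Both are absorbed via Young's inequality; the delicate point is to choose Young's parameter so that after collecting contributions one has
\[
\mathbb{E}\norm{\bs{\delta}_k}_A^2 \;\le\; \bigl(1 + \O{h^{p_2+1/2}}\bigr) e^{-2\beta h}\, \mathbb{E}\norm{\bs{\delta}_{k-1}}_A^2 \;+\; R\, h^{2p_2},
\]
and the prefactor is genuinely at most $e^{-\beta h}$. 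This last inequality is exactly what forces the four-term minimum defining $h_1$ in the theorem statement.

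Next I need a uniform bound on the $\mathbb{E}\norm{\bar{\bs{x}}_{k-1}}^2$ showing up in $R$. Using $\mathbb{E}\norm{\bar{\bs{x}}_{k-1}}^2 \le 2\mathbb{E}\norm{\bs{x}_{(k-1)h}}^2 + 2\mathbb{E}\norm{\bs{\delta}_{k-1}}^2$, I bound the first term uniformly in $k$ by coupling the exact solution to a stationary copy via contractivity, obtaining $\mathbb{E}\norm{\bs{x}_t}^2 \lesssim \kappa_A^2\bigl(\mathbb{E}\norm{\bs{x}_0}^2 + \mathbb{E}_\mu\norm{\bs{x}}^2\bigr)$; this is the origin of the structure $U^2 = 4\mathbb{E}\norm{\bs{x}_0}^2 + 6\mathbb{E}_\mu\norm{\bs{x}}^2$. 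The second term $\mathbb{E}\norm{\bs{\delta}_{k-1}}^2$ feeds back into the recursion and is handled by induction together with the recursion above. Finally I iterate: summing the geometric series in $e^{-\beta h}$ gives $\mathbb{E}\norm{\bs{\delta}_k}_A^2 \le R/(1-e^{-\beta h})\, h^{2p_2} \le (2R/\beta)\, h^{2p_2-1}$ for $h\le 1/(4\beta)$, and converting back via $\norm{\bs{\delta}_k} \le \norm{A^{-1}}_{\mathrm{op}}\norm{\bs{\delta}_k}_A$ and taking a square root yields $e_k \le C\, h^{p_2-1/2}$.

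The main difficulty is the cross-term management in the second paragraph above: one must balance Young's inequality carefully enough that the exponent $e^{-2\beta h}$ from contractivity is degraded only to $e^{-\beta h}$ (not to something $\ge 1$), and simultaneously track every $\kappa_A$, $\beta$, $C_i$, $D_i$, $C_0$ and $U$ factor along the way. Getting these constants right — rather than just the order in $h$ — is precisely what the rest of the paper needs in order to extract a $\sqrt{d}$ dimension dependence when specializing to LMC.
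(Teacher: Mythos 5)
Your proposal is correct and follows essentially the same route as the paper's proof: the same one-step decomposition into a local-error piece plus a contraction of two exact flows, the same cross-term treatment (conditioning on the filtration to invoke the local weak error for the $\bs{\delta}_{k-1}$ part and Cauchy--Schwarz with Lemma 2.1 for the $\bs{z}_h$ part), the same uniform second-moment bound via coupling to a stationary copy, and the same geometric-series unfolding. The only cosmetic difference is that you index the recursion at step $k$ rather than $k+1$ and flag a possible extra $\kappa_A^2$ in the moment bound, which does not change the argument.
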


\begin{remark}[what's new]  Thm.\ref{thm:global-error} refines the seminal results in \cite{li2019stochastic} in the sense that it only requires non-uniform bounds on the local error \eqref{eq:local_error}, whereas \cite{li2019stochastic} requires uniform bounds, i.e., $D_1=D_2=0$ in \eqref{eq:local_error}. Therefore, the refinement we present has wider applicability.

In general, local errors tend to depend on the current step's value, i.e. $D_1 \neq 0, D_2 \neq 0$. Allowing local error bounds to be non-uniform enabled applications such as proving the vanishing bias of mirror Langevin algorithm \citep{li2021mirror}. For a simpler illustration, consider LMC for 1D standard Gaussian target distribution, then we have 
\[
\norm{\mathbb{E} \left( \bs{x}_h - \bar{\bs{x}}_1 \right)} = (e^{-h} - 1 + h) \mathbb{E}\|\bs{x}\| = (\frac{h^2}{2} + o(h^2)) \mathbb{E}\|\bs{x}\|.
\]
One can see that the local error does depend on $\bs{x}$ and is not uniform. Meanwhile, our non-uniform condition still holds because $\left(\frac{h^2}{2} + o(h^2)\right) \mathbb{E}\|\bs{x}\| \leq (\frac{h^2}{2} + o(h^2)) \sqrt{\mathbb{E}\|\bs{x}\|^2}$ (and thus $p_1=2$).
Note if the discretization does converge to a neighborhood of the target distribution, it is possible that $\mathbb{E}\|\bs{x}\|^2$ and/or $\mathbb{E}\|\bs{x}\|$ become bounded near the convergence, and in this case the $D_1$, $D_2$ parts can be absorbed into $C_1$ and $C_2$; however, this `if' clause is exactly what we'd like to prove.

Nevertheless, we state for rigor that the convention $\nicefrac{1}{0} = \infty$ is used when $D_1 = D_2 = 0$. 
Another remark is, even in this case, our bound has a different expression from the seminal results. We will carefully work out, track, and combine dimension-dependence of constants using our bound.
\end{remark}



Following Theorem \ref{thm:global-error}, we obtain the following non-asymptotic bound of the sampling error in $W_2$:
\begin{theorem}\label{thm:w2-non-asymptotic}(\textbf{Non-Asymptotic Sampling Error Bound: General Case})
Under the same assumption and with the same notation of Theorem \ref{thm:global-error}, we have
\begin{equation*}
    W_2(\text{Law}(\bar{\bs{x}}_k), \mu)
    \le e^{- \beta kh}  W_2(\text{Law}(\bs{x}_0), \mu) + C h^{p_2 - \frac{1}{2}}, \quad \forall 0 < h \le h_1.
\end{equation*}
\end{theorem}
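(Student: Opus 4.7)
The plan is to reduce the sampling error bound to two pieces by a triangle inequality in $W_2$ and then bound each piece separately. Concretely, introduce the continuous-time SDE trajectory $\bs{x}_{kh}$ started from the same initial condition $\bs{x}_0$ as the algorithm. Then
\begin{equation*}
W_2(\mathrm{Law}(\bar{\bs{x}}_k), \mu)\le W_2(\mathrm{Law}(\bar{\bs{x}}_k), \mathrm{Law}(\bs{x}_{kh})) + W_2(\mathrm{Law}(\bs{x}_{kh}), \mu),
\end{equation*}
and the two terms correspond respectively to discretization error and the contraction of the continuous dynamics toward its invariant distribution $\mu$.

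First I would handle the discretization piece. Since the pair $(\bar{\bs{x}}_k, \bs{x}_{kh})$ obtained from the synchronous coupling (same $\bs{x}_0$, same driving Brownian motion) is a valid element of $\Pi(\mathrm{Law}(\bar{\bs{x}}_k), \mathrm{Law}(\bs{x}_{kh}))$, the infimum defining $W_2$ is at most $(\mathbb{E}\|\bs{x}_{kh} - \bar{\bs{x}}_k\|^2)^{1/2} = e_k$, and Theorem \ref{thm:global-error} immediately yields $e_k \le C h^{p_2 - 1/2}$ for $0 < h \le h_1$, providing the second term of the claimed bound.

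Next I would treat the contraction piece. Pick an optimal $W_2$-coupling $(\bs{x}_0, \bs{y}_0)$ of $\mathrm{Law}(\bs{x}_0)$ and $\mu$, run a second SDE trajectory $\bs{y}_t$ from $\bs{y}_0$ using the \emph{same} Brownian motion as $\bs{x}_t$, and note $\mathrm{Law}(\bs{y}_{kh}) = \mu$ because $\mu$ is the invariant measure. Applying the contractivity condition \eqref{eq:contraction_def} to the pair $(\bs{x}_t, \bs{y}_t)$ and converting between the $A$-weighted norm and the Euclidean norm via the submultiplicative bounds $\|\bs{v}\| \le \|A^{-1}\|\,\|A\bs{v}\|$ and $\|A\bs{v}\| \le \|A\|\,\|\bs{v}\|$ gives
\begin{equation*}
W_2(\mathrm{Law}(\bs{x}_{kh}), \mu) \le \bigl(\mathbb{E}\|\bs{x}_{kh} - \bs{y}_{kh}\|^2\bigr)^{1/2} \le \kappa_A\, e^{-\beta kh}\, W_2(\mathrm{Law}(\bs{x}_0), \mu),
\end{equation*}
which produces the announced exponential decay term (with the understanding that the $\kappa_A$ factor is absorbed or the SDE is natively contractive in the Euclidean norm, i.e., $A = I$). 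Combining the two bounds finishes the proof.

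Almost nothing in this argument is an obstacle: the only care needed is in the bookkeeping of couplings (the synchronous coupling for the discretization piece versus the optimal-plus-synchronous coupling for the contraction piece) and in recognizing that the invariance of $\mu$ is what lets us identify $\mathrm{Law}(\bs{y}_{kh})$ with $\mu$, so that no additional mixing argument for the continuous-time process is needed beyond \eqref{eq:contraction_def}. The mildly delicate point is checking that the constant $C$ from Theorem \ref{thm:global-error} is already the one appearing in the statement, i.e., that no additional exponential-in-$k$ blow-up is hidden in the coupling construction; this is exactly what the contractive framework of Section \ref{sec:MSA} was set up to avoid.
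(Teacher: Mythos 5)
Your proposal matches the paper's proof essentially step for step: a triangle inequality in $W_2$, the synchronous coupling plus Theorem \ref{thm:global-error} to bound the discretization term by $e_k \le Ch^{p_2-\frac{1}{2}}$, and an optimal coupling of $\mathrm{Law}(\bs{x}_0)$ with $\mu$ evolved under the same Brownian motion, together with \eqref{eq:contraction_def} and invariance of $\mu$, for the exponentially decaying term. Your explicit flagging of the $\kappa_A$ factor arising from converting the $A$-weighted norm back to the Euclidean norm is actually more careful than the paper, whose proof silently omits it (harmless in the LMC application where $A=I$).
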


A corollary of Theorem \ref{thm:w2-non-asymptotic} is a bound on the mixing time of the sampling algorithm:
\begin{corollary}\label{corr:ic-msa}(\textbf{Upper Bound of Mixing Time: General Case})
Under the same assumption and with the same notation of Theorem \ref{thm:global-error}, we have 
\[
    \tau_{\mathrm{mix}}(\epsilon; W_2; \mathcal{A}) \le \max\left\{\frac{1}{\beta h_1},  \frac{1}{\beta} \left(\frac{2C}{\epsilon}\right)^{\frac{1}{p_2 - \frac{1}{2}}} \right\} \log \frac{2 W_2(\text{Law}(\bs{x}_0), \mu)}{\epsilon}
\]
In particular, when high accuracy is needed, i.e., $\epsilon < 2C h_1^{p_2 - \frac{1}{2}}$, we have
\begin{equation}\label{eq:ic-general}
    \tau_{\mathrm{mix}}(\epsilon; W_2; \mathcal{A}) \leq  \frac{(2C)^{\frac{1}{p_2 - \frac{1}{2}}}}{\beta} \frac{1}{\epsilon^{\frac{1}{p_2 - \frac{1}{2}}} } \log \frac{2 W_2(\text{Law}(\bs{x}_0), \mu)}{\epsilon} = \tildeO{\frac{C^{\frac{1}{p_2 - \frac{1}{2}}}}{\beta} \, \frac{1}{\epsilon^{\frac{1}{p_2 - \frac{1}{2}}} }}.
\end{equation}
\end{corollary}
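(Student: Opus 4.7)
The plan is to derive the corollary as a direct consequence of the non-asymptotic bound in Theorem \ref{thm:w2-non-asymptotic} by balancing the two sources of error. That theorem gives a sum of a contraction term $e^{-\beta k h}\, W_2(\mathrm{Law}(\bs{x}_0), \mu)$, which decays exponentially in $kh$, and a discretization-bias term $C h^{p_2 - 1/2}$, which is controlled by the stepsize $h$. I would force each of them to be at most $\epsilon/2$ so that the total error is at most $\epsilon$.

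For the bias term, the requirement $C h^{p_2 - 1/2} \le \epsilon/2$ rearranges to $h \le (\epsilon/(2C))^{1/(p_2-1/2)}$. Together with the admissibility constraint $h \le h_1$ inherited from Theorem \ref{thm:global-error}, the natural choice is the largest allowed stepsize
\[
h^\star = \min\left\{h_1,\ \left(\frac{\epsilon}{2C}\right)^{1/(p_2-1/2)}\right\}.
\]
For the contraction term, $e^{-\beta k h^\star}\, W_2(\mathrm{Law}(\bs{x}_0), \mu) \le \epsilon/2$ is equivalent to $k h^\star \ge \tfrac{1}{\beta} \log \tfrac{2 W_2(\mathrm{Law}(\bs{x}_0), \mu)}{\epsilon}$. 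Taking the smallest integer $k$ satisfying this and using the identity $1/h^\star = \max\{1/h_1,\, (2C/\epsilon)^{1/(p_2-1/2)}\}$ produces exactly the first bound stated in the corollary.

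For the high-accuracy regime $\epsilon < 2 C h_1^{p_2-1/2}$, the bias constraint becomes the binding one, i.e.\ $(\epsilon/(2C))^{1/(p_2-1/2)} < h_1$, so the second argument of the $\max$ dominates. The bound then collapses to the displayed $\tildeO{\cdot}$ expression, since the $\log(2 W_2(\mathrm{Law}(\bs{x}_0),\mu)/\epsilon)$ factor contributes only a polylogarithmic dependence on $1/\epsilon$.

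I do not anticipate any real obstacle: this is essentially a bookkeeping step following Theorem \ref{thm:w2-non-asymptotic}. The only minor care needed is rounding $k$ up to an integer by taking a ceiling, which contributes at most a $+1$ overhead absorbed into $\tildeO{\cdot}$, and handling the boundary case $\epsilon = 2C h_1^{p_2-1/2}$, which is innocuous because either choice in the $\max$ yields the same rate.
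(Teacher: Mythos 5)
Your proposal is correct and follows essentially the same route as the paper's proof: bound each of the two terms in Theorem \ref{thm:w2-non-asymptotic} by $\epsilon/2$, take the largest admissible stepsize $h = \min\{h_1, (\epsilon/(2C))^{1/(p_2-1/2)}\}$, and read off the resulting lower bound on $k$. The remarks about integer rounding and the boundary case are fine but not needed beyond what the paper already does.
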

Corollary \ref{corr:ic-msa} states how mixing time depends on the order of local (strong) error (i.e., $p_2$) of a numerical algorithm. The larger $p_2$ is, the shorter the mixing time of the algorithm is, in term of the dependence on accuracy tolerance parameter $\epsilon$. It is important to note that for constant stepsize discretizations that are deterministic on the filtration of the driving Brownian motion and use only its increments, there is a strong order barrier, namely $p_2 \leq 1.5$ \citep{clark1980maximum,ruemelin1982numerical}; however, methods involving multiple stochastic integrals (e.g., \citet{kloeden1992stochastic, milstein2013stochastic,rossler2010runge}) can yield a larger $p_2$, and randomization (e.g., \citet{shen2019randomized}) can possibly break the barrier too.

The constant $C$ defined in Eq. \eqref{eq:global_error} typically contains rich information about the underlying SDE, e.g. dimension, Lipschitz constant of drift and noise diffusion, and the initial value $\bs{x}_0$ of the sampling algorithm. Through $C$, we can uncover the dependence of mixing time bound on various parameters, such as the dimension $d$. This will be detailed for Langevin Monte Carlo in the next section.

It is worth clarifying that once Thm.\ref{thm:global-error} is proved, establishing Theorem \ref{thm:w2-non-asymptotic} and Corollary \ref{corr:ic-msa} is relatively easy. In fact, analogous results have already been provided in \cite{li2019stochastic}, although they also required uniform local errors as consequences of their Thm.1. Nevertheless, we do not claim novelty in Theorem \ref{thm:w2-non-asymptotic} and Corollary \ref{corr:ic-msa} and they are just presented for completeness. Our main refinement is just Thm.\ref{thm:global-error} over Thm.1 in \cite{li2019stochastic}, and the non-triviality lies in its proof.


\section{Non-Asymptotic Analysis of Langevin Monte Carlo Algorithm}\label{sec:LMC}
\vspace{-6pt}
We now quantify how LMC samples from Gibbs target distribution $\mu \sim \exp\left(-f(\bs{x})\right)$ that has a finite second moment, i.e., $\int_{\mathbb{R}^d} \norm{\bs{x}}^2 d\mu < \infty$. Assume without loss of generality that the origin is a local minimizer of $f$, i.e. $\nabla f(\bs{0}) = \bs{0}$; this is for notational convenience in the analysis and can be realized via a simple coordinate shift, and it is not needed in the practical implementation. In addition, we assume the following two conditions hold:
\begin{assumption}\label{asp:smooth-and-strongly-convex}
(\textbf{Smoothness and Strong Convexity}) \, Assume $f \in \mathcal{C}^2$ and is $L$-smooth and $m$-strongly-convex, i.e. there exists $0 < m \leq  L$ such that $m I_d  \preccurlyeq  \nabla^2 f(\bs{x})  \preccurlyeq LI_d, \quad \forall \bs{x} \in \mathbb{R}^d$. 
\end{assumption}
Denote the condition number of $f$ by $\kappa \triangleq \frac{L}{m}$. The smoothness and strong-convexity assumption is the standard assumption in the literature of analyzing LMC algorithm \citep{ dalalyan2017further, dalalyan2017theoretical, cheng2018convergence, durmus2019analysis, durmus2019high}.

\begin{assumption}\label{asp:linear_3rd_derivative}
(\textbf{Linear Growth of the 3rd-order Derivative}) \, Assume $f \in \mathcal{C}^3$ and the operator $\nabla (\Delta f)$ grows at most linearly, i.e., there exists a constant $G > 0$ such that
$\norm{\nabla (\Delta f(\bs{x}))} \le G \left( 1 + \norm{\bs{x}} \right)$. 
\end{assumption}
\begin{remark}
The linear growth (at infinity) condition on $\nabla \Delta f$ is actually not as restrictive as it appears, and in some sense even weaker than some classical condition for the existence of solutions to SDE. For example, a standard condition for ensuring the existence and uniqueness of a global solution to SDE is at most a linear growth (at infinity) of the drift \citep[Theorem 3.1]{pavliotis2014stochastic}. If we consider monomial potentials, i.e.,  $f(x) = x^p, p \in \mathbb{N}_+$, then the linear growth condition on $\nabla \Delta f$ is met when $p \le 4$, whereas the classical condition for the existence of solutions holds only when $p \le 2$.
\end{remark}

\begin{remark}
    Another additional assumption, namely Hessian Lipschitz condition, is commonly used in the literature (e.g., \citet{durmus2019high,ma2019there}). It requires the existence of a constant $\tilde{L}$, such that $\|\nabla^2 f(\bs{y}) - \nabla^2 f(\bs{x})\| \leq \tilde{L} \| \bs{y} - \bs{x}\|$. It can be shown that smoothness and Hessian Lipschitzness imply A\ref{asp:linear_3rd_derivative}. Meanwhile, examples that satisfy A\ref{asp:linear_3rd_derivative} but are not Hessian Lipschitz exist, e.g., $f(x)=x^4$, and thus A\ref{asp:linear_3rd_derivative} is not necessarily stronger than Hessian Lipschitzness.
\end{remark}

\begin{remark}
Same as $L$ and $m$ in A\ref{asp:smooth-and-strongly-convex}, we implicitly assume the constant $G$ introduced in A\ref{asp:linear_3rd_derivative} to be independent of dimension. Meanwhile, it is important to note examples for which $G$ depends on the dimension do exist, and this is also true for other regularity constants including not only $L$ and $m$ but also the Hessian Lipschitz constant $\tilde{L}$. This part of the assumption is a strong one.
\end{remark}

\begin{remark}
A\ref{asp:linear_3rd_derivative}, together with Ito's lemma, helps establish an order $p_1=2$ of local weak error for LMC (see Lemma \ref{lemma:local-weak-error}), which enables us to obtain the $\sqrt{d}$ dependence.
\end{remark}



To apply mean-square analysis to study LMC algorithm, we will need to ensure the underlying Langevin dynamics is contractive, which we verify in Section \ref{sec:app-ld} and \ref{sec:app-lmc} in the appendix. In addition, we work out all required constants to determine the $C$ in Eq. \ref{eq:global_error} explicitly in the appendix. With all these necessary ingredients, we now invoke Theorem \ref{thm:w2-non-asymptotic}  and obtain the following result:
\begin{theorem}\label{thm:w2-lmc}(\textbf{Non-Asymptotic Error Bound: LMC})
Suppose Assumption \ref{asp:smooth-and-strongly-convex} and \ref{asp:linear_3rd_derivative} hold. LMC iteration $\bar{\bs{x}}_{k+1} = \bar{\bs{x}}_k - h \nabla f(\bar{\bs{x}}_k) + \sqrt{2h} \xi_k$ satisfies
\begin{equation}\label{eq:non-asymptotic-bound-lmc}
    W_2(\text{Law}(\bar{\bs{x}}_k), \mu)
    \le  e^{- mkh}  W_2(\text{Law}(\bs{x}_0), \mu) + \Clmc h, \quad 0 < h \le \frac{1}{4\kappa L}, k \in \mathbb{N}
\end{equation}
where $\Clmc = \frac{10(L^2 + G)}{m^\frac{3}{2}} \sqrt{2d + m\left(\mathbb{E}\norm{\bs{x}_0}^2 + 1\right)} = \mathcal{O}(\sqrt{d})$.
\end{theorem}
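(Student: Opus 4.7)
The plan is to apply Theorem \ref{thm:w2-non-asymptotic} with the overdamped Langevin SDE $d\bs{x}_t = -\nabla f(\bs{x}_t)\,dt + \sqrt{2}\,d\bs{B}_t$ as the continuous dynamics and LMC as the numerical scheme $\mathcal{A}$. Three ingredients are needed: (i) contractivity of the SDE with some rate $\beta$ and matrix $A$; (ii) local weak and strong error bounds as in Eq.~\eqref{eq:local_error} with explicit constants $C_1,D_1,C_2,D_2$; and (iii) careful bookkeeping of the constant $C$ in Eq.~\eqref{eq:constant} to extract its $\sqrt{d}$ scaling.

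Contractivity is the easiest piece. Taking $A = I_d$ (so $\kappa_A = 1$) and running two trajectories driven by the same Brownian motion, strong convexity under Assumption~\ref{asp:smooth-and-strongly-convex} immediately yields $\tfrac{d}{dt}\|\bs{x}_t - \bs{y}_t\|^2 = -2\langle \nabla f(\bs{x}_t) - \nabla f(\bs{y}_t), \bs{x}_t - \bs{y}_t\rangle \le -2m\|\bs{x}_t - \bs{y}_t\|^2$, giving $\beta = m$.

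For the local strong error, I would synchronously couple LMC to the exact solution by taking $\bs{\xi}_1 = \bs{B}_h/\sqrt{h}$. Because the diffusion is additive, the noise terms cancel and $\bs{x}_h - \bar{\bs{x}}_1 = -\int_0^h [\nabla f(\bs{x}_s) - \nabla f(\bs{x})]\,ds$. Cauchy-Schwarz and $L$-smoothness reduce the problem to a short-time moment estimate $\mathbb{E}\|\bs{x}_s - \bs{x}\|^2 \lesssim s(\|\nabla f(\bs{x})\|^2 + d)$, which yields $p_2 = 3/2$ with $C_2 = O(L\sqrt{d})$ and $D_2 = O(L^2)$; the $\sqrt{d}$ comes only through the Brownian motion's contribution. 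For the local weak error, Ito's formula applied to $\nabla f(\bs{x}_s)$ gives
\[
\mathbb{E}\nabla f(\bs{x}_s) - \nabla f(\bs{x}) = \int_0^s \mathbb{E}\!\left[-\nabla^2 f(\bs{x}_u)\nabla f(\bs{x}_u) + \nabla(\Delta f)(\bs{x}_u)\right]\,du,
\]
with the martingale term vanishing in expectation. Assumption~\ref{asp:linear_3rd_derivative} controls $\|\nabla(\Delta f)\|$ linearly in $\|\bs{x}_u\|$, smoothness controls the Hessian-gradient term, and together they give $p_1 = 2$ with $C_1, D_1 = O(L^2 + G)$. The compatibility condition $p_1 \ge p_2 + \tfrac{1}{2}$ holds with equality, which is the whole reason Assumption~\ref{asp:linear_3rd_derivative} is introduced.

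The hardest (though conceptually routine) step is tracking dimension dependence through Eq.~\eqref{eq:constant}. Plugging in $\beta = m$, $\kappa_A = 1$, $p_2 - \tfrac{1}{2} = 1$, and the standard log-strongly-concave moment bound $\mathbb{E}_\mu \|\bs{x}\|^2 = O(d/m)$ gives $U = O(\sqrt{d/m} + \|\bs{x}_0\|)$. The key observation that produces $\sqrt{d}$ rather than $d$ is that $U$ enters $C$ only linearly (through $D_1 U$ and $D_2 U$), not squared, and that $D_1, D_2$ are dimension-free; meanwhile $C_2 \sim L\sqrt{d}$ also contributes only $\sqrt{d}$. Combining every term and bundling the constants into the closed form $\Clmc = \frac{10(L^2 + G)}{m^{3/2}}\sqrt{2d + m(\mathbb{E}\|\bs{x}_0\|^2 + 1)}$ yields the stated bound; the step-size restriction $h \le 1/(4\kappa L)$ follows from the tightest of the four conditions defining $h_1$ in Theorem \ref{thm:global-error}, which turns out to be the one involving $D_2$ via $L^2/m$.
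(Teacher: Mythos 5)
Your proposal is correct in substance and follows the same overall architecture as the paper: verify contraction with $\beta=m$ and $A=I_d$, bound the local strong error ($p_2=3/2$) via the synchronous coupling $\bs{\xi}_1=\bs{B}_h/\sqrt{h}$ so the noise cancels, bound the local weak error ($p_1=2$) via Ito's formula on $\nabla f$ with Assumption~\ref{asp:linear_3rd_derivative} controlling the $\nabla(\Delta f)$ correction, and then feed the constants into the general theorem. The one genuine divergence is how you handle the state-dependence of the local errors. You keep $D_1,D_2$ nonzero and let the dimension enter through $U$ via $\mathbb{E}_\mu\|\bs{x}\|^2=\mathcal{O}(d/m)$, relying on the non-uniform machinery of Theorem~\ref{thm:global-error}. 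The paper instead first proves an a priori bound on the LMC iterates, $\mathbb{E}\|\bar{\bs{x}}_k\|^2\le\mathbb{E}\|\bs{x}_0\|^2+\tfrac{8d}{7m}$ (Lemma~\ref{lemma:bounded-iterates}), absorbs the $\mathbb{E}\|\bs{x}\|^2$ terms into $\widetilde{C}_1,\widetilde{C}_2$, and applies the theorem with $D_1=D_2=0$, so that $U$ drops out of Eq.~\eqref{eq:constant} entirely. Both routes give $\Clmc=\mathcal{O}(\sqrt{d})$; yours buys a cleaner invocation of the framework as designed (no separate iterate-boundedness lemma), while the paper's buys the exact closed form of $\Clmc$ and, more importantly, the clean step-size restriction: with $D_1=D_2=0$ the last two conditions in $h_1$ are vacuous and $h_1=\min\{h_0,\tfrac{1}{4\beta}\}=\tfrac{1}{4\kappa L}$. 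Your claim that $h\le\tfrac{1}{4\kappa L}$ emerges as the tightest of the four conditions is the one imprecise point: with $D_2=\mathcal{O}(L^2)$ and $p_2-\tfrac12=1$ the $D_2$-condition reads $h\lesssim\sqrt{m}/L^2$ and the last condition reads $h\lesssim m/(L^2+G)$, neither of which coincides with $m/(4L^2)$ in general, so your version of the theorem would carry a slightly different (possibly more restrictive, $G$-dependent) step-size range and constant. This is a cosmetic rather than substantive gap.
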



Corollary \ref{corr:ic-msa} combined with the above result gives the following bound on the mixing time of LMC:
\begin{theorem}\label{thm:ic-lmc}(\textbf{Upper Bound of Mixing Time: LMC})
Suppose Assumption \ref{asp:smooth-and-strongly-convex}  and \ref{asp:linear_3rd_derivative} hold. If running LMC from $\bs{x}_0$, we then have 
\begin{equation*}
    \tau_{\mathrm{mix}}(\epsilon; W_2; \mathrm{LMC}) \leq \max\left\{4\kappa^2,  \frac{2C_\text{LMC}}{m} \frac{1}{\epsilon } \right\} \log \frac{2 W_2(\text{Law}(\bs{x}_0), \mu)}{\epsilon}
\end{equation*}
where $C_\text{LMC}$ is the same in Theorem \ref{thm:w2-lmc}. When high accuracy is needed, i.e., $\epsilon \le \frac{C_\text{LMC}}{2m\kappa^2}$, we have
\begin{equation*}
    \tau_{\mathrm{mix}}(\epsilon; W_2; \mathrm{LMC}) \leq  \frac{2C_\text{LMC}}{m} \frac{1}{\epsilon } \log \frac{2 W_2(\text{Law}(\bs{x}_0), \mu)}{\epsilon} = \tildeO{\frac{\sqrt{d}}{\epsilon}}.
\end{equation*}
\end{theorem}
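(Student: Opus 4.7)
The plan is to obtain this theorem as a direct specialization of the general mixing-time bound in Corollary \ref{corr:ic-msa}, using Theorem \ref{thm:w2-lmc} to supply the LMC-specific constants. Because both of those results are already established earlier in the excerpt, the argument should reduce to identifying the right parameters and doing careful bookkeeping.

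First I would read off the parameters that specialize the general framework to LMC: the contraction rate is $\beta = m$ (strong convexity of $f$ makes the overdamped Langevin dynamics contract with this rate and with $A = I$), the local strong error exponent is $p_2 = 3/2$ so that $p_2 - 1/2 = 1$, which is consistent with the linear-in-$h$ bias appearing in Theorem \ref{thm:w2-lmc}; the admissible step-size cutoff is $h_1 = 1/(4\kappa L)$, and the discretization constant is $C = C_{\text{LMC}}$. Substituting into Corollary \ref{corr:ic-msa}, the term $1/(\beta h_1)$ becomes $4\kappa L / m = 4\kappa^2$ after using $\kappa = L/m$, and $\frac{1}{\beta}(2C/\epsilon)^{1/(p_2 - 1/2)}$ collapses to $2 C_{\text{LMC}}/(m\epsilon)$ because the exponent is exactly one. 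These two match the arguments of the max in the first displayed inequality. For the high-accuracy clause, I would verify that the Corollary's threshold $\epsilon < 2 C h_1^{p_2 - 1/2} = C_{\text{LMC}}/(2\kappa L)$ rewrites as $\epsilon \le C_{\text{LMC}}/(2 m \kappa^2)$, via the identity $2 m \kappa^2 = 2\kappa L$. In that regime a direct comparison then gives $2 C_{\text{LMC}}/(m\epsilon) \ge 4\kappa^2$, so the max is attained by the $\epsilon$-dependent term, yielding the cleaner second displayed bound. Finally, the $\tildeO{\sqrt{d}/\epsilon}$ conclusion follows immediately from the explicit form $C_{\text{LMC}} = \mathcal{O}(\sqrt{d})$ recorded in Theorem \ref{thm:w2-lmc}.

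As an equivalent alternative, one can bypass Corollary \ref{corr:ic-msa} and reason directly from Theorem \ref{thm:w2-lmc}: split the target tolerance $\epsilon$ into a decay budget $\epsilon/2$ for the exponential-contraction term and a bias budget $\epsilon/2$ for $C_{\text{LMC}} h$, solve each sub-inequality for $h$ and $k$ respectively, and take the largest admissible step size $h = \min\{1/(4\kappa L),\, \epsilon/(2 C_{\text{LMC}})\}$. I do not foresee a substantive obstacle at this stage; the genuine technical work lives upstream in establishing Theorem \ref{thm:w2-lmc} (verifying contraction of the continuous dynamics, bounding the local weak and strong errors of the Euler--Maruyama step, and controlling second moments uniformly in $k$). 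The only care needed here is to track the algebraic rewrites that interchange $L$, $m$, and $\kappa$ so that the thresholds stated in Corollary \ref{corr:ic-msa} line up exactly with those stated in the theorem, and to invoke the explicit $\sqrt{d}$-scaling of $C_{\text{LMC}}$ at the very last step.
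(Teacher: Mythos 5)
Your proposal is correct and follows essentially the same route as the paper, which obtains Theorem \ref{thm:ic-lmc} precisely by specializing Corollary \ref{corr:ic-msa} with $\beta = m$, $p_2 = \tfrac{3}{2}$, $h_1 = \tfrac{1}{4\kappa L}$, and $C = \Clmc$ from Theorem \ref{thm:w2-lmc}; your algebraic checks ($\tfrac{1}{\beta h_1} = 4\kappa^2$, $2Ch_1^{p_2-1/2} = \Clmc/(2m\kappa^2)$) are exactly the bookkeeping required. No gaps.
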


The $\tildeO{ \frac{\sqrt{d}}{\epsilon} }$ mixing time bound in $W_2$ distance improves upon the previous ones
\citep{dalalyan2017further, cheng2018convergence, durmus2019high, durmus2019analysis} in the dependence of $d$ and/or $\epsilon$. If further assuming $G = \mathcal{O}(L^2)$,  we then have $\Clmc = \mathcal{O}(\kappa^2 \sqrt{m} \sqrt{d})$ and Thm.\ref{thm:ic-lmc} shows the mixing time is $\tildeO{\frac{\kappa^2}{\sqrt{m}} \frac{\sqrt{d}}{\epsilon}}$, which also improves the $\kappa$ dependence in some previous results \citep{dalalyan2017further, cheng2018convergence} in the $m \leq 1$ regime. A brief summary is in Table \ref{tab:comparison}.

\begin{remark}[more comparison]
  The seminal work of \cite{li2019stochastic} provided mean-square analysis (their Thm.1) and obtained a $\tildeO{\frac{d}{\epsilon}}$ mixing time bound for LMC (their Example 1) under smoothness, strong convexity and Hessian Lipschitz conditions, consistent with that in \cite{durmus2019high}. By using our version (Thm.\ref{thm:global-error}) and tracking down constants' dimension-dependence, we are able to tighten it to $\tildeO{ \frac{\sqrt{d}}{\epsilon} }$. Worth clarifying is, dimension-dependence might not be the focus of \cite{li2019stochastic}; instead, it considered $\epsilon$-dependence and other discretizations, and showed for example that 1.5 SRK discretization has improved mixing time bound of $\tildeO{ \frac{d}{\epsilon^{2/3}}}$. The dimension dependence of this discretization, for example, can possibly be improved by our results too.
\end{remark}


\setcounter{footnote}{0}

\begin{table}[t]
    \centering
    \begin{tabular}{|c|c|c|c|c|}
    \hline
     &  mixing time & Additional Assumption   \\ \hline
    \cite[Theorem 1]{dalalyan2017further}  & $\tildeO{\frac{\kappa^2}{m} \cdot \frac{d}{\epsilon^2}}$  & N/A  \\ \hline
    \cite[Theorem 1]{cheng2018convergence} &  $\tildeO{\frac{\kappa^2}{m} \cdot \frac{d}{\epsilon^2}}$ & N/A  \\ \hline
    \cite[Corollary 10]{durmus2019analysis} &   $\tildeO{ \frac{\kappa}{m} \cdot \frac{d}{\epsilon^2}}$ & N/A  \\ \hline
    \cite[Theorem 8]{durmus2019high} & $\tildeO{\frac{d}{\epsilon}}$ \tablefootnote{The dependence on $\kappa$ is not readily available from Theorem 8 in \cite{durmus2019high}.} 
    & $\norm{\nabla^2 f(\bs{x}) - \nabla^2 f(\bs{y})} \le \widetilde{L} \norm{\bs{x}- \bs{y}}$  \\ \hline
    This work (Theorem \ref{thm:ic-lmc}) & $\tildeO{ \frac{\kappa^2}{\sqrt{m}} \cdot \frac{\sqrt{d}}{\epsilon}}$ & Assumption \ref{asp:linear_3rd_derivative} and $G = \mathcal{O}(L^2)$ \tablefootnote{The $G = \mathcal{O}(L^2)$ assumption is only for $\kappa, m$ dependence. Removing it doesn't affect $d, \epsilon$ dependence.
    }
    \\ \hline
    \end{tabular}
    \caption{Comparison of mixing time results in 2-Wassertein distance of LMC with $L$-smooth and $m$-strongly-convex potential. Constant step size is used and accuracy tolerance $\epsilon$ is small enough.}\label{tab:comparison}
\end{table}

\paragraph{Optimality}
In fact, the $\tildeO{\frac{\sqrt{d}}{\epsilon}}$ mixing time of LMC has the optimal scaling one can expect. This is in terms of dependence on $d$ and $\epsilon$, over the class of all log-smooth and log-strongly-convex target measures. To illustrate this, consider the following Gaussian target distribution whose potential is
\begin{equation}\label{eq:quadratic}
    f(\bs{x}) = \frac{m}{2}\sum_{i=1}^d x_i^2 + \frac{L}{2}\sum_{i=d+1}^{2d} x_i^2,\quad \mbox{ with } m = 1, L \geq 4m.
\end{equation}
We now establish a lower bound on the mixing time of LMC algorithm for this target measure.

\begin{theorem}\label{thm:lower-bound}
(\textbf{Lower Bound of Mixing Time}) Suppose we run LMC for the target measure defined in Eq. \eqref{eq:quadratic} from $\bs{x}_0 = \bs{1}_{2d}$, then for any choice of step size $h > 0$ within stability limit, we have
\[
    \tau_{\mathrm{mix}}(\epsilon; W_2; \mathrm{LMC}) \geq \frac{\sqrt{d}}{8\epsilon} \log \frac{\sqrt{d}}{\epsilon} = \widetilde{\Omega}\left(\frac{\sqrt{d}}{\epsilon}\right).
\]
\end{theorem}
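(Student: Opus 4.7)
The plan is to exploit the separable, diagonal-quadratic structure of the potential in Eq.~\eqref{eq:quadratic} to decouple LMC into one-dimensional problems, reducing the lower bound to exact AR(1) computations. With $\lambda_i = m = 1$ for $i \le d$, $\lambda_i = L$ for $d < i \le 2d$, and $a_i := 1-h\lambda_i$, the LMC recursion factors into $\bar{x}_k^{(i)} = a_i \bar{x}_{k-1}^{(i)} + \sqrt{2h}\,\xi_k^{(i)}$, so starting from $\bar{x}_0^{(i)} = 1$ the iterate is exactly Gaussian with mean $a_i^k$ and variance $v_i^{(k)} = 2h(1-a_i^{2k})/(1-a_i^2)$. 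Stability requires $|a_i|<1$, hence $h < 2/L$, which is at most $1/2$ by $L \ge 4$. Since both the iterate law and $\mu = \bigotimes_i \mathcal{N}(0,1/\lambda_i)$ factor as products of one-dimensional Gaussians, the tensorization of $W_2^2$ and the closed form for one-dimensional Gaussians give
\[
W_2^2(\mathrm{Law}(\bar{\bs{x}}_k),\mu) \,=\, \sum_{i=1}^{2d}\Bigl[a_i^{2k} + \bigl(\sqrt{v_i^{(k)}}-1/\sqrt{\lambda_i}\bigr)^2\Bigr].
\]

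Next I would assume $W_2(\mathrm{Law}(\bar{\bs{x}}_k),\mu) \le \epsilon$ at some iteration $k$ and extract two complementary constraints by focusing on the $d$ \emph{soft} coordinates ($\lambda_i=1$). The mean contribution forces $d(1-h)^{2k} \le \epsilon^2$; inverting via $-\log(1-h) \le h/(1-h)$ together with $h \le 1/2$ yields the lower bound $k \ge \frac{(1-h)\log(d/\epsilon^2)}{2h} \ge \frac{\log(d/\epsilon^2)}{4h}$. The variance contribution forces $(\sqrt{v_1^{(k)}}-1)^2 \le \epsilon^2/d$, hence $v_1^{(k)} \le (1+\epsilon/\sqrt{d})^2$. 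Substituting $v_1^{(k)} = 2(1-(1-h)^{2k})/(2-h)$ and feeding in the already-established smallness $(1-h)^{2k} \le \epsilon^2/d$, a short algebraic rearrangement isolates the step-size upper bound $h \le 4\epsilon/\sqrt{d}$. Combining these two constraints delivers $k \ge \frac{\sqrt{d}\log(d/\epsilon^2)}{16\epsilon} = \frac{\sqrt{d}}{8\epsilon}\log\frac{\sqrt{d}}{\epsilon}$, which is precisely the stated lower bound.

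The main obstacle is the variance-side argument. At finite $k$, $v_1^{(k)}$ is strictly less than the LMC stationary variance $\hat{v}_1 = 2/(2-h)$, so the bias $|\sqrt{v_1^{(k)}}-1|$ cannot be estimated directly by the stationary-bias expression $\sqrt{2/(2-h)}-1$; a priori $v_1^{(k)}$ could even hit $1$ and make the per-coordinate variance bias vanish, which would scuttle a naive appeal to stationary bias. What rescues the argument is that the mean constraint \emph{already} pins $(1-h)^{2k}$ to order $\epsilon^2/d$, so $v_1^{(k)}$ is pinned within a tiny neighborhood of $\hat{v}_1 > 1$; the mean and variance constraints must then simultaneously squeeze $h$ from above and $k$ from below in the coupled manner described. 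A secondary technical check is that the substitutions and Taylor-type bounds above remain valid across the full stability range $h \in (0,2/L)$, which is secured by the hypothesis $L \ge 4m$.
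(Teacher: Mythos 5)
Your proposal is correct and follows essentially the same route as the paper's proof: exact Gaussian/AR(1) computation per coordinate, the tensorized closed-form for $W_2^2$ between product Gaussians, the mean constraint $d(1-h)^{2k}\le\epsilon^2$ giving $k\gtrsim\frac{1}{4h}\log\frac{d}{\epsilon^2}$, and the variance constraint (with the already-pinned smallness of $(1-h)^{2k}$) giving $h\le 4\epsilon/\sqrt{d}$, combined to yield the stated bound. The subtlety you flag about $v_1^{(k)}$ lying below the stationary variance is exactly the point the paper handles in its inequality \eqref{eq:inequality_2_necessary}.
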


Combining Theorem \ref{thm:ic-lmc} and \ref{thm:lower-bound}, we see that mean-square analysis provides a tight bound for LMC and $\tildeO{\frac{\sqrt{d}}{\epsilon}}$ is the optimal scaling of LMC for target measures satisfying Assumptions \ref{asp:smooth-and-strongly-convex} and \ref{asp:linear_3rd_derivative}.

Note that the above optimality results only partially, but \textit{not} completely, close the gap between the upper and lower bounds of LMC over the entire family of log-smooth and log-strongly-convex target measures, because of one limitation of our result --- A(ssumption)\ref{asp:linear_3rd_derivative} is, despite of its close relation to the Hessian Lipschitz condition frequently used in the literature, still an extra condition. We tend to believe that A\ref{asp:linear_3rd_derivative} may not be essential, but rather an artifact of our proof technique. However, at this moment we cannot eliminate the possibility that the best scaling one can get out of Assumption \ref{asp:smooth-and-strongly-convex} only (no A\ref{asp:linear_3rd_derivative}) is worse than $\tildeO{\sqrt{d}/\epsilon}$. We'd like to further investigate this in future work.

\paragraph{Discussion} Besides \cite{li2019stochastic} (see the previous Remark), let's briefly discuss three more important sampling algorithms related to LMC. Two of them are Kinetic Langevin Monte Carlo (KLMC) and Randomized Midpoint Algorithm (RMA), both of which are discretizations of kinetic Langevin dynamics. The other is  Metropolis-Adjusted Langevin Algorithm (MALA) which uses the one-step update of LMC as a proposal and then accepts/rejects it with Metropolis-Hastings.

The $\tildeO{\frac{\sqrt{d}}{\epsilon}}$ mixing time in 2-Wasserstein distance of KLMC has been established for log-smooth and log-strongly-convex target measures in existing literature \citep{cheng2017underdamped, dalalyan2018sampling} and that was a milestone. Due to its better dimension dependence over previously best known results of LMC, KLMC is understood to be the analog of Nesterov's accelerated gradient method for sampling \citep{ma2019there}. Our findings show that LMC is able to achieve the same mixing time, although under an additional growth-at-infinity condition. However, this does not say anything about whether/how KLMC accelerates LMC, as the existing KLMC bound may still be not optimal. We also note KLMC has better condition number dependence than our current LMC result, although the $\kappa$ dependence in our bound may not be tight. 


RMA \citep{shen2019randomized} is based on a brilliant randomized discretization of kinetic Langevin dynamics and shown to have further improved dimension dependence (and other pleasant properties). From the perspective of this work, we think it is because RMA is able to break the strong order barrier due to the randomization, and more investigations based on mean-square analysis should be possible.

For MALA, a recent breakthrough \citep{chewi2020optimal} establishes a $\tildeO{\sqrt{d}}$ mixing time in $W_2$ distance with warm start, and the dimension dependence is shown to be optimal. We see that without the Metropolis adjustment, LMC (under additional assumptions such as A\ref{asp:linear_3rd_derivative}) can also achieve the same dimension dependence as MALA. But unlike LMC, MALA only has logarithmic dependence on $\frac{1}{\epsilon}$. With warm-start, is it possible/how to improve the dependence of $\frac{1}{\epsilon}$ for LMC, from polynomial to logarithmic? This question is beyond the scope of this paper but worth further investigation.

\section{Numerical Examples}\label{sec:numerical}
\vspace{-6pt}
This section numerically verifies our theoretical findings for LMC in Section \ref{sec:LMC}, with a particular focus on the dependence of the discretization error in Theorem \ref{thm:w2-lmc} on dimension $d$ and step size $h$. To this end, we consider two target measures specified by the following two potentials:
\begin{equation}\label{eq:test-potentials}
    f_1(\bs{x}) = \frac{1}{2} \norm{\bs{x}}^2 + \log\left(\sum_{i=1}^d e^{x_i} \right) \quad \mbox{and} \quad f_2(\bs{x}) = \frac{1}{2} \norm{\bs{x}}^2 - \frac{1}{2d^\frac{1}{2}} \sum_{i=1}^d \cos\left(d^\frac{1}{4} x_i \right).
\end{equation}
It is not hard to see $f_1$ is 2-smooth and 1-strongly convex, $f_2$ is $\frac{3}{2}$-smooth and 1-strongly-convex, and both satisfy Assumption \ref{asp:linear_3rd_derivative}. $f_2$ is also used in \citep{chewi2020optimal} to illustrate the optimal dimension dependence of MALA. Explicit expression of 2-Wasserstein distance between non-Gaussian distributions is typically not available, instead, we use the Euclidean norm of the mean error as a surrogate because $\norm{\mathbb{E} \bar{\bs{x}}_k - \mathbb{E}_\mu \bs{x}} \leq W_2(\mathrm{Law}(\bar{\bs{x}}_k), \mu)$ due to Jensen's inequality. 
To obtain an accurate estimate of the ground truth, we run $10^8$ independent LMC realizations using a tiny step size (h = 0.001), each till a fixed, long enough time, and use the empirical average to 
approximate $\mathbb{E}_\mu \bs{x}$.

To study the dimension dependence of sampling error,
we fix step size $h=0.1$, and for each $d \in \left\{ 1, 2, 5, 10, 20, 50, 100, 200, 500, 1000\right\}$, we simulate $10^4$ independent Markov chains using LMC algorithm for $100$ iterations, which is long enough for the chain to be well-mixed. 
The mean and the standard deviation of the sampling error corresponding to the last 10 iterates are recorded.

To study step size dependence of sampling 
error, we fix $d=10$ and experiment with  step size $h \in \left\{ 1,2,3,4,5,6,7,8,9,10 \right\} \times 10^{-1}$. We run LMC till $T=20$, i.e., $\ceil{\frac{T}{h}}$ iterations for each $h$. The procedure is repeated $10^4$ times with different random seeds to obtain independent samples. When the corresponding continuous time $t=kh > 10$, we see from Eq. \eqref{eq:non-asymptotic-bound-lmc} that LMC is well converged and the sampling error is saturated by the discretization error. Therefore, for each $h$, we take the last $\ceil{\frac{10}{h}}$ iterates and record the mean and standard deviation of their sampling error.

\begin{figure}[ht]
    \centering
    \begin{subfigure}{0.24\textwidth}
		\centering
		\includegraphics[width=\textwidth]{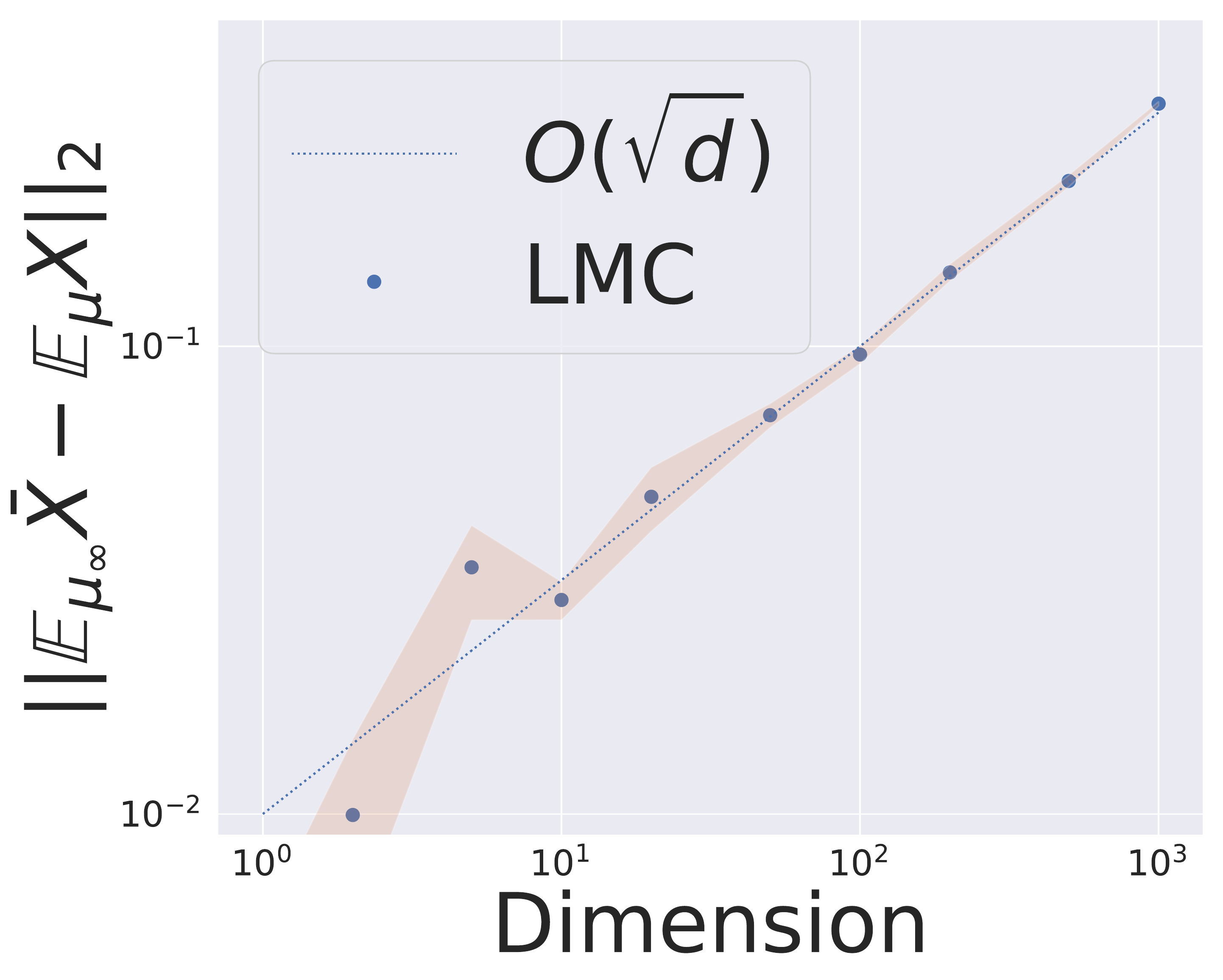}
		\vspace{-15pt}\caption{$f_1$: $d$ dependence} \label{fig:f1_d}	
	\end{subfigure}
    \begin{subfigure}{0.24\textwidth}
		\centering
		\includegraphics[width=\textwidth]{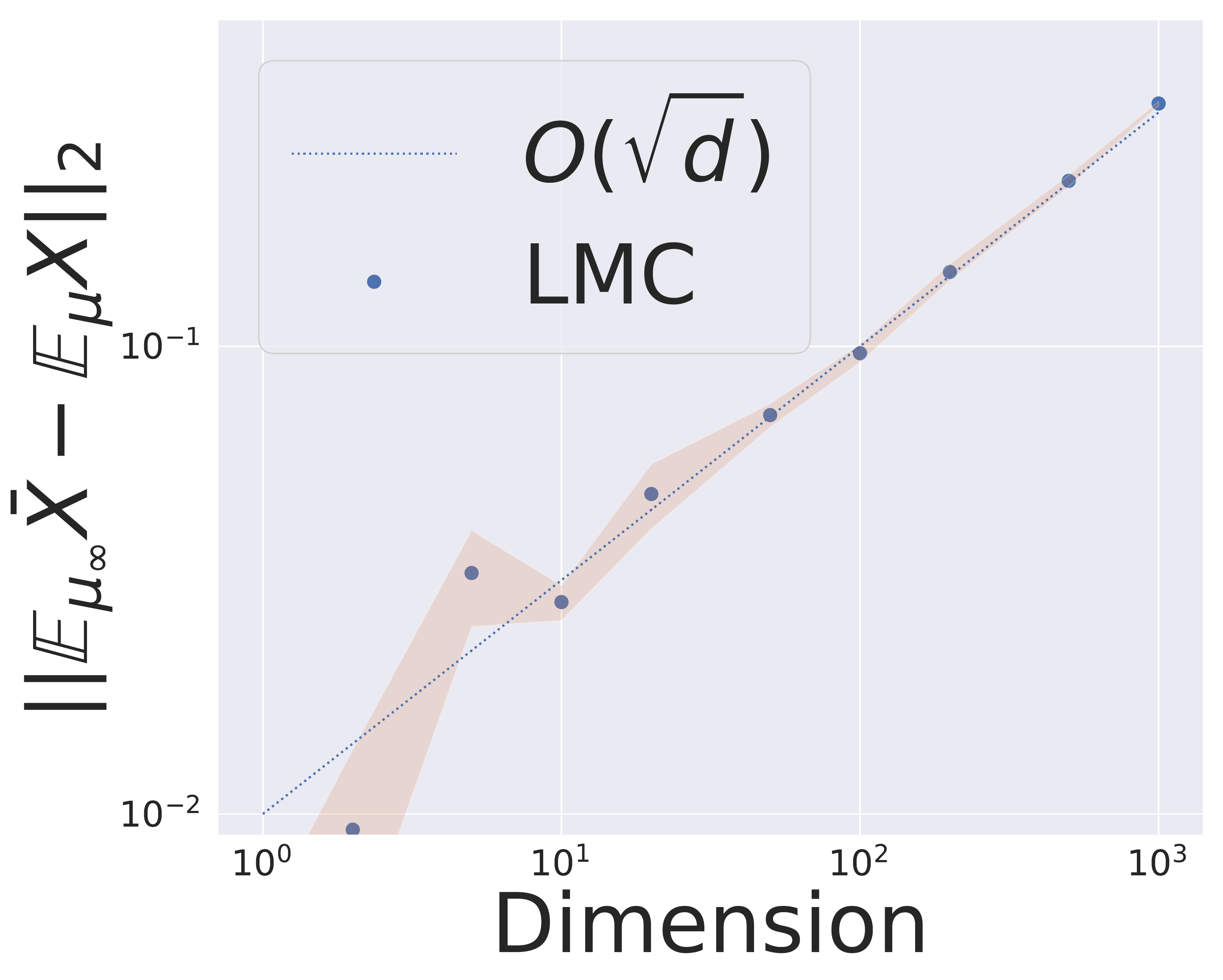}
		\vspace{-15pt}\caption{$f_2$: $d$ dependence 
		} \label{fig:f2_d}
	\end{subfigure}
    \begin{subfigure}{0.24\textwidth}
		\centering
		\includegraphics[width=\textwidth]{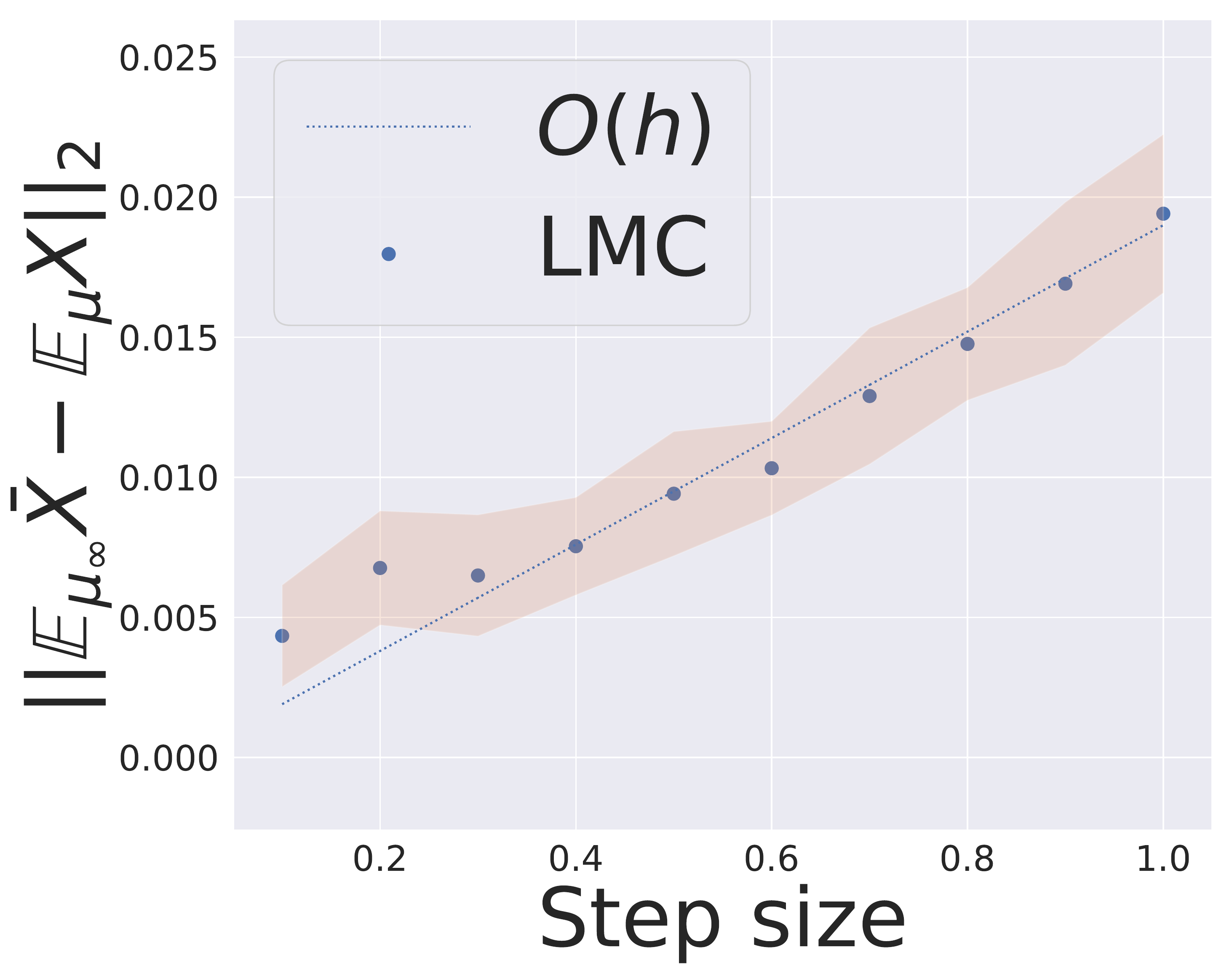}
		\vspace{-15pt}\caption{$f_1$: $h$ dependence} \label{fig:f1_h}	
	\end{subfigure}
    \begin{subfigure}{0.24\textwidth}
		\centering
		\includegraphics[width=\textwidth]{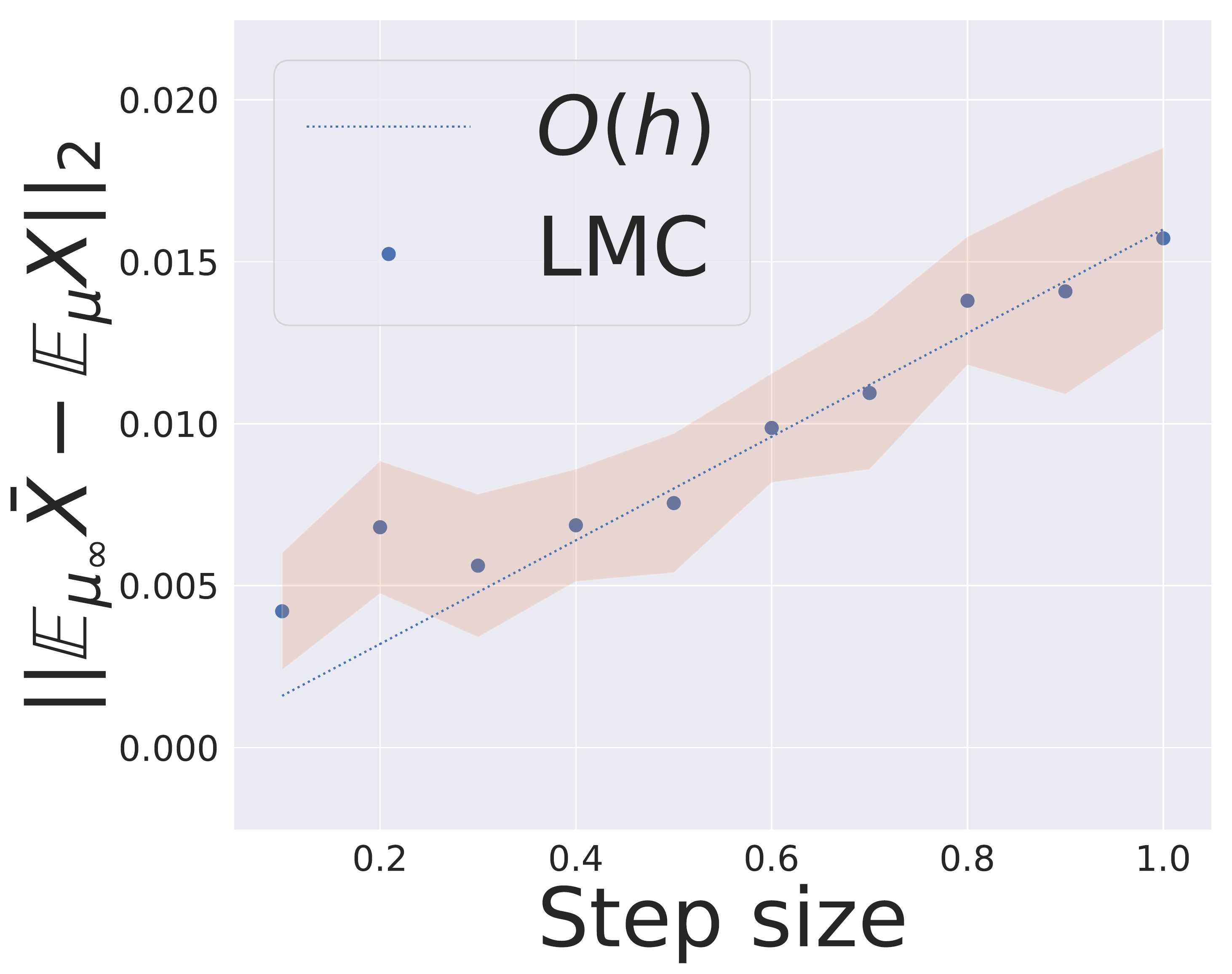}
		\vspace{-15pt}\caption{$f_2$: $h$ dependence} \label{fig:f2_h}
	\end{subfigure}
	\vspace{-8pt}
    \caption{Dependence of the sampling error of LMC on dimension $d$ and step size $h$ for $f_1$ and $f_2$. Both axes in Fig.\ref{fig:f1_d} \& \ref{fig:f2_d} are in log scale. Shaded areas in Fig.\ref{fig:f1_d} \& \ref{fig:f2_d} represent one std. of the last 10 iterations. Shaded areas in Fig.\ref{fig:f1_h} \& \ref{fig:f2_h} represent one std. of the last $\ceil{\frac{10}{h}}$ iterations.
    }\label{fig:numerical}
\end{figure}

Results shown in Fig.\ref{fig:numerical} are consistent with our theoretical analysis of the sampling error. 
Both linear dependence on $\sqrt{d}$ and $h$ can be supported by the empirical evidence. Note results with smaller $h$ are less accurate because one starts to see the error of empirical approximation due to finite samples. 

\section{Conclusion}\label{sec:conclusion}
\vspace{-6pt}
Via a refined mean-square analysis of Langevin Monte Carlo algorithm, we obtain an improved and optimal $\tildeO{ \nicefrac{\sqrt{d}}{\epsilon} }$ bound on its mixing time, 
which was previously thought to be obtainable only with the addition of momentum. This was under the standard smoothness and strongly-convexity assumption, plus an addition linear growth condition on the third-order derivative of the potential function, similar to Hessian Lipschitz condition already popularized in the literature.

Here are some possible directions worth further investigations. (i) 
Combine mean-square analysis with stochastic gradient analysis to study SDE-based stochastic gradient MCMC methods;  (ii) Is it still possible to obtain $\sqrt{d}$-dependence without A\ref{asp:linear_3rd_derivative}, i.e., only under log-smooth and log-strongly-convex conditions? (iii) Applications of mean-square analysis to other SDEs and/or discretizations; (iv) Motivated by \citet{chewi2020optimal}, it would be interesting to know whether the dependence on $\frac{1}{\epsilon}$ can be improved to logarithmic, for example if LMC is initialized at a warm start.

\section*{Acknowledgments}
The authors thank Andre Wibisono, Chenxu Pang, anonymous reviewers and area chair for suggestions that significantly improved the quality of this paper. MT was partially supported by NSF DMS-1847802 and ECCS-1936776. This work was initiated when HZ was a professor and RL was a PhD student at Georgia Institute of Technology.

\bibliography{reference}
\bibliographystyle{iclr2022_conference}

\newpage
\appendix
\section{Proof of Results in Section \ref{sec:MSA}}
\subsection{Proof of Theorem \ref{thm:global-error} (Global Integration Error, Infinite Time Version)}

\begin{proof}
We write the solution of an SDE by $\bs{x}_{t_0, \bs{x}_{t_0}}(t_0 + t)$ when the dependence on initialization needs highlight. Denote $t_k = kh$ and $\bs{x}_{t_k} = \bs{x}_k$ for better readability.

We will first make an easy observation that contraction and bounded 2nd-moment of the invariant distribution lead to bounded 2nd-moment of the SDE solution for all time: let $\bs{y}_0$ be a random variable following the invariant distribution of Eq. \eqref{eq:sde}, i.e., $\bs{y}_0 \sim \mu$, then $\bs{y}_t \sim \mu$ and 
\begin{align*}
    \mathbb{E}\norm{\bs{x}_t}^2 \le& 2\mathbb{E}\norm{\bs{x}_t - \bs{y}_t}^2 + 2\mathbb{E}\norm{\bs{y}_t}^2 \\
    \le& 2\mathbb{E} \norm{\bs{x}_0 - \bs{y}_0}^2 \exp(-2\beta t) + 2\mathbb{E}\norm{\bs{y}_t}^2 \\
    \le& 4\mathbb{E} (\norm{\bs{x}_0}^2 + \norm{\bs{y}_0}^2) \exp(-2\beta t) + 2\mathbb{E}\norm{\bs{y}_t}^2 \\
    =& 4 \mathbb{E}\norm{\bs{x}_0}^2 \exp(-2\beta t) + \left(2 + 4\exp(-2\beta t)\right) \mathbb{E}_{\bs{y} \sim \mu} \norm{\bs{y}}^2 \\
    \le& 4 \mathbb{E}\norm{\bs{x}_0}^2 + 6 \int_{\mathbb{R}^d} \norm{\bs{y}}^2 d\mu \triangleq U^2
\end{align*}
and then it follows that
\begin{equation}\label{eq:bound_on_sampler}
    \mathbb{E}\norm{\bar{\bs{x}}_k}^2 \leq 2\mathbb{E}\norm{\bar{\bs{x}}_k - \bs{x}_k}^2 + 2 \mathbb{E}\norm{\bs{x}_k}^2 \leq 2e_k^2 + 2U^2.
\end{equation}

Denote $\innerprodA{\bs{x}}{\bs{y}} = \innerprod{A\bs{x}}{A\bs{y}}$, $\norm{\bs{x}}_A = \norm{A\bs{x}}$ and 
\begin{equation}
    f_k = \left\{\mathbb{E}\norm{\bs{x}_k - \bar{\bs{x}}_k}_A^2 \right\}^\frac{1}{2}
\end{equation}
where $A$ is the non-singular matrix from Equation \eqref{eq:contraction_def}. Also denote that largest and smallest singular values of $A$ by $\sigma_\mathrm{max}$ and $\sigma_\mathrm{min}$, respectively, and the condition number of $A$ by $\kappa_A = \frac{\sigma_\mathrm{max}}{\sigma_\mathrm{min}}$. Recall $e_k = \mathbb{E}\norm{\bs{x}_k - \bar{\bs{x}}_k} $, it is easy to see that
\begin{equation}
    \sigma_\mathrm{min} e_k \leq f_k \leq \sigma_\mathrm{max} e_k.
\end{equation}

Further, we have the following decomposition
\begin{align}
    f_{k+1}^2 
    =& \mathbb{E} \norm{\bs{x}_{k+1} - \bar{\bs{x}}_{k+1}}_A^2 \nonumber\\
    =& \mathbb{E} \norm{\bs{x}_{t_k, \bs{x}_{t_k}}(t_{k+1}) - \bs{x}_{t_k, \bar{\bs{x}}_k}(t_{k+1}) + \bs{x}_{t_k, \bar{\bs{x}}_k}(t_{k+1}) - \bar{\bs{x}}_{k+1}}_A^2 \nonumber\\
    =&  \underbrace{\mathbb{E} \norm{\bs{x}_{t_k, \bs{x}_{t_k}}(t_{k+1}) - \bs{x}_{t_k, \bar{\bs{x}}_k}(t_{k+1})}_A^2}_{\circled{1}} 
        + \underbrace{\mathbb{E} \norm{\bs{x}_{t_k, \bar{\bs{x}}_k}(t_{k+1}) - \bar{\bs{x}}_{k+1}}_A^2}_{\circled{2}} \label{eq:decomposition}\\
        &+ 2\underbrace{\mathbb{E} \innerprod{ A\left(\bs{x}_{t_k, \bs{x}_{t_k}}(t_{k+1}) - \bs{x}_{t_k, \bar{\bs{x}}_k}(t_{k+1})\right)}{A\left(\bs{x}_{t_k, \bar{\bs{x}}_k}(t_{k+1}) - \bar{\bs{x}}_{k+1}\right) }}_{\circled{3}}. \nonumber
\end{align}

Term $\circled{1}$ is taken care of the contraction property
\begin{equation}\label{eq:term1_msa}
    \mathbb{E} \norm{\bs{x}_{t_k, \bs{x}_{t_k}}(t_{k+1}) - \bs{x}_{t_k, \bar{\bs{x}}_k}(t_{k+1})}_A^2 \leq f_k^2 \exp(-2\beta h).
\end{equation}

Term $\circled{2}$ is dealt with by the bound on local strong error
\begin{equation}\label{eq:term2_msa}
    \mathbb{E} \norm{\bs{x}_{t_k, \bar{\bs{x}}_k}(t_{k+1}) - \bar{\bs{x}}_{k+1}}_A^2 \leq \sigma_\mathrm{max}^2 \left( C_2^2 + D_2^2 \mathbb{E}\norm{\bar{\bs{x}}_k}^2 \right) h^{2p_2}.
\end{equation}

Term $\circled{3}$ requires more efforts to cope with, and by the decomposition in Eq. \eqref{eq:z} we have
\begin{align}
    & \mathbb{E} \innerprodA{(\bs{x}_{t_k, \bs{x}_{t_k}}(t_{k+1}) - \bs{x}_{t_k, \bar{\bs{x}}_k}(t_{k+1})}{\bs{x}_{t_k, \bar{\bs{x}}_k}(t_{k+1}) - \bar{\bs{x}}_{k+1}} \nonumber\\
    =& \mathbb{E} \innerprodA{\bs{x}_k - \bar{\bs{x}}_k}{\bs{x}_{t_k, \bar{\bs{x}}_k}(t_{k+1}) - \bar{\bs{x}}_{k+1}} + \mathbb{E} \innerprodA{\bs{z}_h(\bs{x}_k, \bar{\bs{x}}_k)}{\bs{x}_{t_k, \bar{\bs{x}}_k}(t_{k+1}) - \bar{\bs{x}}_{k+1}} \nonumber\\
    \stackrel{(i)}{=}& \mathbb{E} \innerprodA{\bs{x}_k - \bar{\bs{x}}_k}{ \mathbb{E}  [\bs{x}_{t_k,\bar{\bs{x}}_k}(t_{k+1}) - \bar{\bs{x}}_{k+1} | \mathcal{F}_k] } + \mathbb{E} \innerprodA{\bs{z}_h(\bs{x}_k, \bar{\bs{x}}_k)}{\bs{x}_{t_k, \bar{\bs{x}}_k}(t_{k+1}) - \bar{\bs{x}}_{k+1}} \nonumber\\
    \stackrel{(ii)}{\le}& f_k \, \left(\mathbb{E} \norm{\mathbb{E}  [\bs{x}_{t_k,\bar{\bs{x}}_k}(t_{k+1}) - \bar{\bs{x}}_{k+1} | \mathcal{F}_k]}_A^2\right)^\frac{1}{2}  +  \left(\mathbb{E} \norm{\bs{z}_h(\bs{x}_k, \bar{\bs{x}}_k)}_A^2\right)^\frac{1}{2} \,  \left(\mathbb{E} \norm{\bs{x}_{t_k, \bar{\bs{x}}_k}(t_{k+1}) - \bar{\bs{x}}_{k+1}}_A^2\right)^\frac{1}{2} \nonumber\\
    \stackrel{(iii)}{\leq} & \sigma_\mathrm{max} f_k \, \left(\mathbb{E} \norm{\mathbb{E}  [\bs{x}_{t_k,\bar{\bs{x}}_k}(t_{k+1}) - \bar{\bs{x}}_{k+1} | \mathcal{F}_k]}^2\right)^\frac{1}{2}  + \sigma_\mathrm{max}^2 \left(\mathbb{E} \norm{\bs{z}_h(\bs{x}_k, \bar{\bs{x}}_k)}^2\right)^\frac{1}{2} \,  \left(\mathbb{E} \norm{\bs{x}_{t_k, \bar{\bs{x}}_k}(t_{k+1}) - \bar{\bs{x}}_{k+1}}^2\right)^\frac{1}{2} \nonumber\\
    \stackrel{(iv)}{\le}& \sigma_\mathrm{max} f_k \, \left( C_1 + D_1 \sqrt{\mathbb{E} \norm{\bar{\bs{x}}_k}^2}  \right) h^{p_1} +  
    \kappa_A \sigma_\mathrm{max} C_0 f_k \sqrt{h} \, \left( C_2 + D_2 \sqrt{\mathbb{E} \norm{\bar{\bs{x}}_k}^2} \right) h^{p_2} \nonumber\\
    \stackrel{(v)}{\le}& \kappa_A \sigma_\mathrm{max} (C_1 + C_0 C_2) e_k h^{p_2 + \frac{1}{2}} + \kappa_A \sigma_\mathrm{max} (D_1 + C_0 D_2) \sqrt{\mathbb{E} \norm{\bar{\bs{x}}_k}^2} f_k h^{p_2 + \frac{1}{2}}  \label{eq:term3_msa}
\end{align}
where $(i)$ uses the tower property of conditional expectation and $\mathcal{F}_k$ is the filtration at $k$-th iteration, $(ii)$ uses Cauchy-Schwarz inequality, $(iii)$ is due to the relationship between $e_k$ and $f_k$, $(iv)$ is due to local weak error, local strong error and Eq. \eqref{eq:z}, and $(v)$ is due to $p_1 \ge p_2 + \frac{1}{2}$ and $0 < h \leq h_0 \leq 1$.


Now plug Eq. \eqref{eq:term1_msa}, \eqref{eq:term2_msa} and \eqref{eq:term3_msa} in Eq. \eqref{eq:decomposition}, we obtain
\begin{align*}
    f_{k+1}^2 \le& f_k^2 \exp(-2\beta h) + \sigma_\mathrm{max}^2 \left( C_2^2 + D_2^2 \mathbb{E}\norm{\bar{\bs{x}}_k}^2 \right) h^{2p_2} + \kappa_A \sigma_\mathrm{max}(C_1 + C_0 C_2) f_k h^{p_2 + \frac{1}{2}}  \\
    &+ \kappa_A \sigma_\mathrm{max} (D_1 + C_0 D_2) \sqrt{\mathbb{E} \norm{\bar{\bs{x}}_k}^2} f_k h^{p_2 + \frac{1}{2}} \\
    \stackrel{(i)}{\le}&  \left( 1 - \frac{7}{8}\beta h \right)f_k^2  + \sigma_\mathrm{max}^2 \left( C_2^2 + D_2^2 \mathbb{E}\norm{\bar{\bs{x}}_k}^2 \right) h^{2p_2} + \kappa_A \sigma_\mathrm{max}(C_1 + C_0 C_2) f_k h^{p_2 + \frac{1}{2}}  \\
    &+ \kappa_A \sigma_\mathrm{max} (D_1 + C_0 D_2) \sqrt{\mathbb{E} \norm{\bar{\bs{x}}_k}^2} f_k h^{p_2 + \frac{1}{2}} \\
    \stackrel{(ii)}{\leq} & \left( 1 - \frac{7}{8}\beta h \right)f_k^2  + \kappa_A \sigma_\mathrm{max} \left( C_1 + C_0 C_2 + \sqrt{2}U(D_1 + C_0 D_2) \right) f_k h^{p_2 + \frac{1}{2}} + 2\kappa_A^2 D_2^2 f_k^2 h^{2p_2} \\
    &+ \sqrt{2} \kappa_A^2 (D_1 + C_0 D_2) f_k^2 h^{p_2 + \frac{1}{2}} + \sigma_\mathrm{max}^2 \left( C_2^2 + 2D_2^2U^2 \right) h^{2p_2}  \\
    \stackrel{(iii)}{\leq}& \left( 1 - \frac{7}{8}\beta h \right)f_k^2  + \kappa_A \sigma_\mathrm{max}  \left( C_1 + C_0 C_2 + \sqrt{2}U(D_1 + C_0 D_2) \right) f_k h^{p_2 + \frac{1}{2}} + \frac{3\beta}{8} f_k^2 h \\
    &+ \sigma_\mathrm{max}^2 \left( C_2^2 + 2D_2^2U^2 \right) h^{2p_2} \\
    =& \left( 1 - \frac{1}{2}\beta h \right)f_k^2  + \kappa_A \sigma_\mathrm{max}\left( C_1 + C_0 C_2 + \sqrt{2}U(D_1 + C_0 D_2) \right) f_k h^{p_2 + \frac{1}{2}} \\
    &+ \sigma_\mathrm{max}^2 \left( C_2^2 + 2D_2^2U^2 \right) h^{2p_2} \\
    \stackrel{(iv)}{\leq} & \left( 1 - \frac{1}{2}\beta h \right)f_k^2 + \frac{\beta}{4}f_k^2 h + \frac{\kappa_A^2 \sigma_\mathrm{max}^2 \left( C_1 + C_0 C_2 + \sqrt{2}U(D_1 + C_0 D_2) \right)^2}{\beta} h^{2p_2} \\
    &+ \sigma_\mathrm{max}^2 \left( C_2^2 + 2D_2^2U^2 \right) h^{2p_2} \\
    =& \left( 1 - \frac{1}{4}\beta h \right)f_k^2 + \kappa_A^2 \sigma_\mathrm{max}^2 \left( \frac{\left( C_1 + C_0 C_2 + \sqrt{2}U(D_1 + C_0 D_2) \right)^2}{\beta} +  C_2^2 + 2D_2^2U^2 \right) h^{2p_2}
\end{align*}
where $(i)$ is due to the assumption $0 < h \leq\frac{1}{4\beta}$ and  $e^{-x} \leq1 - x + \frac{x^2}{2} \mbox{ for } 0 < x < 1$, $(ii)$ is due to the upper bound on $\mathbb{E}\norm{\bar{\bs{x}}_k}^2$ in Eq. \eqref{eq:bound_on_sampler}, $(iii)$ holds provided when $h \leq \min\left\{  \left(\frac{\sqrt{\beta}}{4\sqrt{2} \kappa_A D_2} \right)^\frac{1}{p_2 - \frac{1}{2}}, \left( \frac{\beta}{8\sqrt{2}\kappa_A^2 (D_1 + C_0 D_2)} \right)^\frac{1}{p_2 - \frac{1}{2}}   \right\}$ and $(iv)$ is due to Cauchy-Schwarz inequality.

Unfolding the above inequality gives us
\begin{align*}
    f_{k+1}^2 
    \le& \frac{4}{\beta} \kappa_A^2 \sigma_\mathrm{max}^2 \left( \frac{\left( C_1 + C_0 C_2 + \sqrt{2}U(D_1 + C_0 D_2) \right)^2}{\beta} +  C_2^2 + 2D_2^2U^2 \right) h^{2p_2-1}.
\end{align*}
Taking square root on both sides and using $\sqrt{a^2 + b^2 + c^2} \leq a + b + c, \forall a, b, c \geq 0$ yields
\begin{align*}
    f_{k+1} \leq \frac{2}{\sqrt{\beta}} \kappa_A \sigma_\mathrm{max} \left( \frac{C_1 + C_0 C_2 + \sqrt{2}U(D_1 + C_0 D_2) }{\sqrt{\beta}} +  C_2 + \sqrt{2} D_2U \right) h^{p_2 - \frac{1}{2}}.
\end{align*}
Finally using the relationship between $e_k$ and $f_k$, we obtain
\[
    e_k \leq \frac{2}{\sqrt{\beta}} \kappa_A^2 \left( \frac{C_1 + C_0 C_2 + \sqrt{2}U(D_1 + C_0 D_2) }{\sqrt{\beta}} +  C_2 + \sqrt{2} D_2U \right) h^{p_2 - \frac{1}{2}}.
\]
\end{proof}

\subsection{Proof of Theorem \ref{thm:w2-non-asymptotic} (Non-Asymptotic Sampling Error Bound: General Case)}
\begin{proof}
Let $\bs{y}_0 \sim \mu$ and $(\bs{x}_0, \bs{y}_0)$ are coupled such that $\mathbb{E}\norm{\bs{x}_0 - \bs{y}_0}^2 = W_2^2(\text{Law}(\bs{x}_0), \mu)$. Denote the solution of Eq. \eqref{eq:sde} starting from $\bs{x}_0, \bs{y}_0$ by $\bs{x}_t, \bs{y}_t$ respectively, and $t_k = kh$. We have
\begin{align*}
    W_2(\text{Law}(\bar{\bs{x}}_k), \mu) \le& W_2(\text{Law}(\bar{\bs{x}}_k), \text{Law}(\bs{x}_{t_k})) + W_2(\text{Law}(\bs{x}_{t_k}), \mu)\\
    \le& \sqrt{\mathbb{E}\norm{\bar{\bs{x}}_k - \bs{x}_{t_k}}^2} + \sqrt{\mathbb{E}\norm{\bs{x}_{t_k} - \bs{y}_{t_k}}^2} \\
    \stackrel{(i)}{\le}& e_k + \sqrt{\mathbb{E}\norm{\bs{x}_0 - \bs{y}_0}^2 \exp\left(-2\beta t_k\right)} \\
    =& e_k + \exp\left(-\beta t_k\right) W_2(\text{Law}(\bs{x}_0), \mu)
\end{align*}
where $(i)$ is due to the contraction assumption on Eq. \eqref{eq:sde}. Invoking the conclusion of Theorem \ref{thm:global-error} completes the proof.
\end{proof}

\subsection{Proof of Corollary \ref{corr:ic-msa} (Upper Bound of Mixing Time: General Case)}
\begin{proof}
Given any tolerance $\epsilon > 0$,  we know from Theorem \ref{thm:w2-non-asymptotic} that if $k$ is large enough and $h$ is small enough such that
\begin{align}
    & \exp\left( - \beta kh\right)  W_2(\text{Law}(\bs{x}_0), \mu)  \leq\frac{\epsilon}{2} \label{eq:requirement-1}.\\
    & C h^{p_2 - \frac{1}{2}} \leq\frac{\epsilon}{2} \label{eq:requirement-2}
\end{align}
we then have $W_2(\text{Law}(\bar{\bs{x}}_k), \mu) \leq\epsilon$. Solving Inequality \eqref{eq:requirement-1} yields 
\begin{equation}\label{eq:lower-bound-on-k}
    k \ge \frac{1}{\beta h} \log \frac{2 W_2(\text{Law}(\bs{x}_0), \mu)}{\epsilon} \triangleq   k^\star
\end{equation}
To minimize the lower bound, we want pick step size $h$ as large as possible. Besides $h \leq h_1$, Eq. \eqref{eq:requirement-2} poses further constraint on $h$, hence we have 
\[
    h \leq\min \left\{h_1, \left(\frac{\epsilon}{2C}\right)^{\frac{1}{p_2 - \frac{1}{2}}}\right\}.
\]
Plug the upper bound of $h$ in Eq. \eqref{eq:lower-bound-on-k}, we have
\[
    k^\star = \max\left\{\frac{1}{\beta h_1},  \frac{1}{\beta} \left(\frac{2C}{\epsilon}\right)^{\frac{1}{p_2 - \frac{1}{2}}} \right\} \log \frac{2 W_2(\text{Law}(\bs{x}_0), \mu)}{\epsilon}.
\]
When high accuracy is needed, i.e., $\epsilon < 2C h_1^{p_2 - \frac{1}{2}}$, we have 
\[
    k^\star =  \frac{(2C)^{\frac{1}{p_2 - \frac{1}{2}}}}{\beta} \frac{1}{\epsilon^{\frac{1}{p_2 - \frac{1}{2}}} } \log \frac{2 W_2(\text{Law}(\bs{x}_0), \mu)}{\epsilon} = \tildeO{ \frac{C^{\frac{1}{p_2 - \frac{1}{2}}}}{\beta} \, \frac{1}{\epsilon^{\frac{1}{p_2 - \frac{1}{2}}} }}.
\]
\end{proof}

\section{Proof of Results in Section \ref{sec:LMC}} 
\subsection{Proof of Theorem \ref{thm:w2-lmc} (Non-Asymptotic Error Bound: LMC)}
\begin{proof}
From Lemma \ref{lemma:contraction} we know that Langevin dynamics is a member of the family of contractive SDE, and with a contraction rate of strong-convexity coefficient $\beta = m$ (w.r.t. identity matrix $I_{d \times d}$).

Next, we will need to work out the constants $C_0, C_1, D_1, D_2, C_2$ needed in Theorem \ref{thm:global-error}. We have $C_0 = \frac{\sqrt{m}}{2}$, implied from Lemma \ref{lemma:Z}.

The local strong error and local weak error are bounded in Lemma \ref{lemma:local-strong-error} and \ref{lemma:local-weak-error} respectively. Note that the coefficient $\widetilde{C}_1$/$\widetilde{C}_2$ in the bound for local strong/weak error depends on initial value, which changes from iteration to iteration. Combined with Lemma \ref{lemma:bounded-iterates}, we would obtain $C_1$ and $C_2$, namely
\begin{align*}
    \widetilde{C}_1 \le 2(L^2 + G) \left(\frac{d}{4\kappa L} + \mathbb{E}\norm{\bs{x}_0}^2 + \frac{8d}{7m} + 1\right)^\frac{1}{2} 
    \le 2(L^2  + G) \sqrt{\frac{2d}{m} + \mathbb{E}\norm{\bs{x}_0}^2 +  1}  \triangleq C_1
\end{align*}
and 
\begin{equation*}
    \widetilde{C}_2 \leq2L \left( d + \frac{m}{2}\left(\mathbb{E}\norm{\bs{x}_0}^2 + \frac{8d}{7m}\right) \right)^\frac{1}{2} \leq 2L\sqrt{m} \sqrt{\frac{2d}{m} + \mathbb{E}\norm{\bs{x}_0}^2 + 1} \triangleq C_2.
\end{equation*}

We collect all constants here in the proof for easier reference
\begin{align*}
    &A = I_{d\times d}, \kappa_A = 1, \beta = m, \, h_0 = \frac{1}{4\kappa L}, \, C_0 = \frac{\sqrt{m}}{2}, \\
    &C_1 = 2(L^2 + G)  \sqrt{\frac{2d}{m} + \mathbb{E}\norm{\bs{x}_0}^2 +  1} , \, D_1 = 0 \\
    &C_2 = 2L\sqrt{m} \sqrt{\frac{2d}{m} + \mathbb{E}\norm{\bs{x}_0}^2 + 1}, \, D_2 = 0.
\end{align*}
Then the constant in Theorem \ref{thm:global-error} for LMC algorithm simplifies to 
\begin{align*}
    C &= \frac{2}{\sqrt{\beta}}\left( \frac{C_1 + C_0 C_2}{\sqrt{\beta}} +  C_2 \right), \\
    &\leq \frac{10(L^2 + G)}{m^\frac{3}{2}} \sqrt{2d + m\left(\mathbb{E}\norm{\bs{x}_0}^2 + 1\right)} \triangleq \Clmc.
\end{align*}
Assuming $L, m, G$ are all constants and independent of $d$, then clearly $\Clmc = \mathcal{O}(\sqrt{d})$. Then applying Theorem \ref{thm:w2-non-asymptotic} to LMC, we have
\begin{equation}\label{eq:w2_lmc}
    W_2(\text{Law}(\bar{\bs{x}}_k), \mu)
    \leq e^{- mkh}  W_2(\text{Law}(\bs{x}_0), \mu) + \Clmc h
\end{equation}
for $0 < h \leq\frac{1}{4\kappa L}$. 
\end{proof}

\subsection{Proof of Theorem \ref{thm:lower-bound} (Lower Bound of Mixing Time)}
\begin{proof}
If we start from $\bs{x}_0 = \bs{1}_{2d}$ and run LMC for the potential function in Eq. \eqref{eq:quadratic}, we then have
\[
    \left(\bar{\bs{x}}_k\right)_i
    =
    \begin{cases}
    (1 - mh)^k (\bs{x}_0)_i + \sqrt{2h} \sum_{l=1}^k (1-mh)^{k-l} (\bs{\xi}_l)_i  , \, 1 \leq i\le d \\
    (1 - Lh)^k (\bs{x}_0)_i + \sqrt{2h} \sum_{l=1}^k (1-Lh)^{k-l} (\bs{\xi}_l)_i , \, d + 1\le i\le 2d
    \end{cases}
\]
and hence 
\[
    \left(\bar{\bs{x}}_k\right)_i
    \sim 
    \begin{cases}
    \mathcal{N}\left( (1 - mh)^k, \frac{2}{m(2-mh)} \left(1 - (1-mh)^{2k}\right) \right), \, 1\le i\le d \\
    \mathcal{N}\left( (1 - Lh)^k, \frac{2}{L(2-Lh)} \left(1 - (1-Lh)^{2k}\right) \right), \, d + 1\le i\le 2d
    \end{cases}  
\]
Clearly, stability requires $h < \frac{2}{L}$.

The squared 2-Wasserstein distance between the law of the $k$-th iterate of LMC and target distribution is 
\begin{align*}
    W_2^2(\text{Law}(\bar{\bs{x}}_k), \mu)
    =& d(1-mh)^{2k} + \frac{d}{m}\left( \sqrt{\frac{2}{2-mh}}\sqrt{1 - (1-mh)^{2k}} - 1\right)^2 \\
    +& d(1-Lh)^{2k} + \frac{d}{L}\left( \sqrt{\frac{2}{2-Lh}}\sqrt{1 - (1-Lh)^{2k}} - 1\right)^2.
\end{align*}

Suppose $W_2(\text{Law}(\bar{\bs{x}}_k), \mu) \le \epsilon$, we then must have
\begin{align}
    d(1 - mh)^{2k} \le& \epsilon^2 \label{eq:inequality_1}\\
    \frac{d}{m}\left( \sqrt{\frac{2}{2-mh}}\sqrt{1 - (1-mh)^{2k}} - 1\right)^2 \le& \epsilon^2 \label{eq:inequality_2}.
\end{align}

A necessary condition of Eq. \eqref{eq:inequality_2} is that
\begin{equation}\label{eq:inequality_2_necessary}
    1 + \frac{\sqrt{m}}{\sqrt{d}} \epsilon \ge \sqrt{\frac{2}{2-mh}}\sqrt{1 - (1-mh)^{2k}} 
    \stackrel{(i)}{\ge} \sqrt{\frac{2}{2-mh}}\sqrt{1 - \frac{\epsilon^2}{d}}
\end{equation}
where $(i)$ is due to Eq. \eqref{eq:inequality_1}. It follows from Eq. \eqref{eq:inequality_2_necessary} and $m=1$ that
\begin{equation}\label{eq:upper-bound-h}
    h \le \frac{4}{1 + \frac{\epsilon}{\sqrt{d}}} \frac{\epsilon}{\sqrt{d}} \le \frac{4\epsilon}{\sqrt{d}}.
\end{equation}
Revisiting Eq. \eqref{eq:inequality_1} yields
\begin{align}
    &\epsilon^2 \ge d(1-mh)^{2k}
    \stackrel{(i)}{\ge} d \left(1 - 2mh + \frac{(2mh)^2}{2}\right)^{2k}
    \stackrel{(ii)}{\ge} d e^{-4m kh} \nonumber\\
    \iff& k \ge \frac{1}{2hm}\log\frac{\sqrt{d}}{\epsilon}      \label{eq:lower-bound-k}
\end{align}
where $(i)$ is due to $mh < \frac{2}{\kappa} < \frac{1}{2}$ and $(ii)$ is due to $e^{-x} \le 1 - x + \frac{x^2}{2}, 0 < x < 1$.

Combine Eq. \eqref{eq:upper-bound-h} and \eqref{eq:lower-bound-k}, we then obtain a lower bound of the mixing time
\[
    k \ge \frac{\sqrt{d}}{8m\epsilon}\log\frac{\sqrt{d}}{\epsilon} = \frac{\sqrt{d}}{8\epsilon}\log\frac{\sqrt{d}}{\epsilon} = \widetilde{\Omega}\left( \frac{\sqrt{d}}{\epsilon} \right).
\]
\end{proof}

\section{Some Properties of Langevin Dynamics} \label{sec:app-ld}
\subsection{Contraction of Langevin Dynamics}
\begin{lemma}\label{lemma:contraction}
Suppose Assumption \ref{asp:smooth-and-strongly-convex} holds. Then two copies of overdamped Langevin dynamics have the following contraction property
\[
     \left\{\mathbb{E}\norm{\bs{y}_t - \bs{x}_t}^2 \right\}^\frac{1}{2} \le  \left\{ \mathbb{E}\norm{\bs{y} - \bs{x}}^2 \right\}^\frac{1}{2} \exp(-mt) 
\]
where $\bs{x}, \bs{y}$ are the initial values of $\bs{x}_t, \bs{y}_t$.
\end{lemma}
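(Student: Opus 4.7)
The plan is to use a synchronous coupling of the two SDEs driven by the \emph{same} Brownian motion, exploit strong convexity to obtain a deterministic pointwise contraction, and then take expectations.

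Concretely, let $\bs{x}_t$ and $\bs{y}_t$ solve the overdamped Langevin SDE $d\bs{z}_t = -\nabla f(\bs{z}_t)\, dt + \sqrt{2}\, d\bs{B}_t$ with the same Brownian motion $\bs{B}_t$ but initial conditions $\bs{x}$ and $\bs{y}$ respectively. Subtracting the two SDEs, the stochastic integrals cancel, so $\bs{\Delta}_t := \bs{y}_t - \bs{x}_t$ satisfies the ordinary (pathwise) differential equation
\[
\frac{d}{dt} \bs{\Delta}_t = -\bigl(\nabla f(\bs{y}_t) - \nabla f(\bs{x}_t)\bigr).
\]
Differentiating the squared norm and applying the $m$-strong convexity of $f$, which yields $\langle \bs{y}-\bs{x}, \nabla f(\bs{y}) - \nabla f(\bs{x})\rangle \ge m\|\bs{y} - \bs{x}\|^2$, I obtain
\[
\frac{d}{dt} \|\bs{\Delta}_t\|^2 = -2 \bigl\langle \bs{\Delta}_t,\, \nabla f(\bs{y}_t) - \nabla f(\bs{x}_t) \bigr\rangle \le -2m \|\bs{\Delta}_t\|^2.
\]

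By Grönwall's inequality this gives the pathwise bound $\|\bs{\Delta}_t\|^2 \le \|\bs{\Delta}_0\|^2 e^{-2mt}$, i.e., $\|\bs{y}_t - \bs{x}_t\| \le \|\bs{y} - \bs{x}\| e^{-mt}$ almost surely. Squaring, taking expectations, and taking square roots yields the claimed bound $(\mathbb{E}\|\bs{y}_t - \bs{x}_t\|^2)^{1/2} \le (\mathbb{E}\|\bs{y}-\bs{x}\|^2)^{1/2} e^{-mt}$.

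There is no real obstacle here; the only subtle point is ensuring that the coupling is admissible, i.e., that both solutions are driven by the same $\bs{B}_t$ so that the noise term drops out upon subtraction, and that the solutions exist and are pathwise unique (guaranteed by the Lipschitz continuity of $\nabla f$ implied by smoothness via A\ref{asp:smooth-and-strongly-convex}). Everything else is a one-line Grönwall computation, and the contraction constant equals the strong convexity parameter $m$, matching the value $\beta = m$ used later in the LMC analysis.
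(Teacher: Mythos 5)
Your proof is correct and follows essentially the same route as the paper: synchronous coupling so the noise cancels, strong convexity to get the differential inequality, then Grönwall. The only (immaterial) difference is that you apply Grönwall pathwise and then take expectations, while the paper takes expectations first and runs Grönwall on $L_t = \frac{1}{2}\mathbb{E}\|\bs{y}_t-\bs{x}_t\|^2$, handling random initial data by conditioning and Jensen; your pathwise version subsumes that case directly.
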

\begin{proof}
First assume $\bs{x}, \bs{y}$ are deterministic. Suppose $\bs{x}_t, \bs{y}_t$ are respectively the solutions to 
\begin{align*}
    d\bs{x}_t =& -\nabla f(\bs{x}_t) dt + \sqrt{2} d\bs{B}_t \\
    d\bs{y}_t =& -\nabla f(\bs{y}_t) dt + \sqrt{2} d\bs{B}_t
\end{align*}
where $\bs{B}_t$ is a standard $d$-dimensional Brownian motion. Denote $L_t = \frac{1}{2} \mathbb{E} \norm{\bs{y}_t - \bs{x}_t}^2$ and take time derivative, we obtain
\begin{align*}
    \frac{d}{dt} L_t =  -\mathbb{E} \innerprod{\bs{y}_t - \bs{x}_t}{\nabla f(\bs{y}_t) - \nabla f(\bs{x}_t)}
    \stackrel{(i)}{\le} - m \mathbb{E} \norm{\bs{y}_t - \bs{x}_t}^2 
    = -2m L_t
\end{align*}
where $(i)$ is due to the strong-convexity assumption made on $f$. We then obtain 
$
    L_t \le L_0 \exp(-2mt)
$
and it follows by Gronwall's inequality that
\[
    \left\{\mathbb{E}\norm{\bs{y}_t - \bs{x}_t}^2 \right\}^\frac{1}{2} \le \norm{\bs{y} - \bs{x}} \exp(-mt).
\]
When $\bs{x}, \bs{y}$ are random, by the conditioning version of the above inequality and Jensen's inequality, we have
\[
    \left\{ \mathbb{E}\left[\mathbb{E}\norm{\bs{y}_t - \bs{x}_t}^2  \bigg\rvert \bs{x}, \bs{y}\right]  \right\}^\frac{1}{2} \le \left\{ \mathbb{E}\norm{\bs{y} - \bs{x}}^2 \exp(-2mt) \right\}^\frac{1}{2} = \left\{ \mathbb{E}\norm{\bs{y} - \bs{x}}^2 \right\}^\frac{1}{2} \exp(-mt)  .
\]
\end{proof}

\subsection{Growth Bound of Langevin Dynamics}
\begin{lemma}\label{lemma:growth}
Suppose Assumption \ref{asp:smooth-and-strongly-convex} holds, then when $0 \le h \le \frac{1}{4\kappa L}$,  the solution of overdamped Langevin dynamics $\bs{x}_t$ satisfies
\[
\mathbb{E} \norm{\bs{x}_h - \bs{x} }^2 \le 6\left( d + \frac{m}{2}\mathbb{E}\norm{\bs{x}}^2 \right) h
\]
where $\bs{x}$ is the initial value at $t=0$.
\end{lemma}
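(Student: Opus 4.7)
The plan is to decompose the increment via the SDE's integral form, then control the drift contribution with an a priori moment bound on the continuous solution.

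First, I would write
\[
\bs{x}_h - \bs{x} = -\int_0^h \nabla f(\bs{x}_s)\,ds + \sqrt{2}\,\bs{B}_h,
\]
and apply the inequality $\|a+b\|^2 \le 2\|a\|^2 + 2\|b\|^2$ together with Cauchy--Schwarz on the time integral and the identity $\mathbb{E}\|\bs{B}_h\|^2 = dh$ to get
\[
\mathbb{E}\norm{\bs{x}_h - \bs{x}}^2 \;\le\; 2h\int_0^h \mathbb{E}\norm{\nabla f(\bs{x}_s)}^2\,ds \;+\; 4dh.
\]
Since $\nabla f(\bs{0}) = \bs{0}$ (standing assumption in the paper) and $f$ is $L$-smooth, $\|\nabla f(\bs{x}_s)\| \le L\|\bs{x}_s\|$, so the task reduces to bounding $\mathbb{E}\|\bs{x}_s\|^2$ uniformly for $s \in [0,h]$.

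Next, I would apply It\^o's lemma to $\|\bs{x}_t\|^2$ along the Langevin SDE:
\[
d\|\bs{x}_t\|^2 = -2\innerprod{\bs{x}_t}{\nabla f(\bs{x}_t)}\,dt + 2\sqrt{2}\innerprod{\bs{x}_t}{d\bs{B}_t} + 2d\,dt.
\]
Taking expectation removes the martingale term, and strong convexity combined with $\nabla f(\bs{0})=\bs{0}$ gives $\innerprod{\bs{x}_t}{\nabla f(\bs{x}_t)} \ge m\|\bs{x}_t\|^2$, so
\[
\frac{d}{dt}\mathbb{E}\|\bs{x}_t\|^2 \le -2m\,\mathbb{E}\|\bs{x}_t\|^2 + 2d.
\]
Gr\"onwall's inequality then yields $\mathbb{E}\|\bs{x}_s\|^2 \le \mathbb{E}\|\bs{x}\|^2 + d/m$ for all $s \ge 0$.

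Finally, I would plug this uniform bound back into the earlier inequality to obtain
\[
\mathbb{E}\norm{\bs{x}_h-\bs{x}}^2 \le 2h^2 L^2\Bigl(\mathbb{E}\|\bs{x}\|^2 + \tfrac{d}{m}\Bigr) + 4dh,
\]
and verify that the stepsize restriction $h \le 1/(4\kappa L)$ makes $2hL^2 \le m/2 \le 3m$ and $2hL^2 d/m \le d/2 \le 2d$, so the right-hand side is dominated by $3m\,\mathbb{E}\|\bs{x}\|^2 h + 6dh = 6(d + \tfrac{m}{2}\mathbb{E}\|\bs{x}\|^2)h$. The main (mild) obstacle is the moment bound on $\mathbb{E}\|\bs{x}_s\|^2$ via It\^o plus Gr\"onwall; everything else is arithmetic checking that $h \le 1/(4\kappa L)$ absorbs the $h^2$ term into the linear-in-$h$ bound with the prescribed constant $6$.
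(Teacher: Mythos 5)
Your proof is correct, but it takes a genuinely different route from the paper's at the key intermediate step. Both proofs start identically: write $\bs{x}_h-\bs{x}=-\int_0^h\nabla f(\bs{x}_s)\,ds+\sqrt{2}\bs{B}_h$, split with $\norm{a+b}^2\le 2\norm{a}^2+2\norm{b}^2$, and handle the Brownian term exactly. The difference is how the drift integral is controlled. The paper decomposes $\nabla f(\bs{x}_s)=\bigl(\nabla f(\bs{x}_s)-\nabla f(\bs{x})\bigr)+\nabla f(\bs{x})$ and runs Gr\"onwall on the displacement $\mathbb{E}\norm{\bs{x}_t-\bs{x}}^2$ itself, arriving at $4h\bigl(d+h\mathbb{E}\norm{\nabla f(\bs{x})}^2\bigr)e^{4L^2h^2}$ and then using $\norm{\nabla f(\bs{x})}\le L\norm{\bs{x}}$, $h\le\frac{1}{4\kappa L}$, and $4e^{1/4}\le 6$; this uses only $L$-smoothness and $\nabla f(\bs{0})=\bs{0}$. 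You instead bound $\norm{\nabla f(\bs{x}_s)}\le L\norm{\bs{x}_s}$ directly and establish the uniform-in-time moment bound $\mathbb{E}\norm{\bs{x}_s}^2\le\mathbb{E}\norm{\bs{x}}^2+d/m$ via It\^o plus dissipativity, which additionally invokes $m$-strong convexity. Your route is arguably cleaner (no exponential factor to absorb, and the moment bound is the continuous-time analogue of the paper's Lemma D.3 for the discrete iterates), at the cost of using strong convexity where the paper's lemma needs only smoothness for this step; since A1 is assumed anyway, this costs nothing here. Your final arithmetic checks out: $2hL^2\le m/2$ under $h\le\frac{1}{4\kappa L}$ gives $2h^2L^2\mathbb{E}\norm{\bs{x}}^2\le\frac{m}{2}h\mathbb{E}\norm{\bs{x}}^2$ and $2h^2L^2d/m\le\frac{d}{2}h$, so the total $\frac{m}{2}h\mathbb{E}\norm{\bs{x}}^2+\frac{9}{2}dh$ is comfortably within $6\bigl(d+\frac{m}{2}\mathbb{E}\norm{\bs{x}}^2\bigr)h$ (in fact slightly tighter than the paper's constant). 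The only point to make explicit in a full write-up is the integrability needed to kill the martingale term $2\sqrt{2}\innerprod{\bs{x}_t}{d\bs{B}_t}$ after taking expectations, which is standard under A1.
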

\begin{proof}
We have
\begin{align*}
    \mathbb{E}\norm{\bs{x}_h - \bs{x}}^2 =& \mathbb{E} \norm{ -\int_0^h \nabla f(\bs{x}_t)dt + \sqrt{2}\int_0^h d\bs{B}_t }^2\\
    \le& 2\mathbb{E}\norm{\int_0^h \nabla f(\bs{x}_t)dt}^2 + 4\mathbb{E} \norm{\int_0^h d\bs{B}_t}^2 \\
    \stackrel{(i)}{=}& 2\mathbb{E}\norm{\int_0^h \nabla f(\bs{x}_t)dt}^2 + 4hd \\
    \le& 2\mathbb{E}\left[ \left(\int_0^h \norm{\nabla f(\bs{x}_t) - \nabla f(\bs{x})} dt + \int_0^h \norm{\nabla f(\bs{x})} dt \right)^2 \right] + 4hd \\
    \le& 2\mathbb{E}\left[ \left(L \int_0^h \norm{\bs{x}_t - \bs{x}} dt + h\norm{\nabla f(\bs{x})}\right)^2 \right] + 4hd \\
    \le& 4\mathbb{E}\left[ L^2\left( \int_0^h \norm{\bs{x}_t - \bs{x}} dt \right)^2 + h^2\norm{\nabla f(\bs{x})}^2 \right] + 4hd \\ 
    \stackrel{(ii)}{\le}& 4hd + 4h^2\mathbb{E}\norm{\nabla f(\bs{x})}^2 + 4L^2 h \int_0^h \mathbb{E}\norm{\bs{x}_t - \bs{x} }^2dt
\end{align*}
where $(i)$ is due to Ito's isometry, $(ii)$ is due to Cauchy-Schwarz inequality. By Gronwall's inequality, we obtain
\[
    \mathbb{E} \norm{\bs{x}_h - \bs{x}}^2 \le 4h\left( d + h\mathbb{E}\norm{\nabla f(\bs{x})}^2 \right) \exp\left\{ 4L^2h^2 \right\}.
\]
Since $\norm{\nabla f(\bs{x})} = \norm{\nabla f(\bs{x}) - \nabla f(\bs{0})} \le L \norm{\bs{x}}$, when $0 < h \le \frac{1}{4\kappa L}$, we finally reach at 
\[
    \mathbb{E} \norm{\bs{x}_h - \bs{x}}^2 \le 4 e^\frac{1}{4} \left( d + 2hL^2\mathbb{E}\norm{\bs{x}}^2 \right) h \le 6\left( d + \frac{m}{2}\mathbb{E}\norm{\bs{x}}^2 \right) h.
\]
\end{proof}

\subsection{Bound on Evolved Deviation}
\begin{lemma}\label{lemma:Z}
    Suppose Assumption \ref{asp:smooth-and-strongly-convex} holds. Let $\bs{x}_t, \bs{y}_t$ be two solutions of overdamped Langevin dynamics starting from $\bs{x}, \bs{y}$ respectively, for $0 < h \le \frac{1}{4\kappa L}$, we have the following representation
        \[
            \bs{x}_h - \bs{y}_h = \bs{x} - \bs{y} + \bs{z}
        \]
        with 
        \[
            E\norm{\bs{z}}^2 \le \frac{m}{4} \mathbb{E}\norm{\bs{x} - \bs{y}}^2 h.
        \]
\end{lemma}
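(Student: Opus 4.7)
The plan is to exploit the synchronous coupling already in force: both $\bs{x}_t$ and $\bs{y}_t$ solve overdamped Langevin driven by the same Brownian motion, so when we subtract the two integral representations,
\[
\bs{x}_h - \bs{y}_h = (\bs{x} - \bs{y}) - \int_0^h \bigl(\nabla f(\bs{x}_t) - \nabla f(\bs{y}_t)\bigr)\,dt,
\]
the stochastic integrals cancel. Reading off the deviation term gives $\bs{z} = -\int_0^h (\nabla f(\bs{x}_t) - \nabla f(\bs{y}_t))\,dt$, which is the only object I need to control. Note this cancellation is exactly why the bound on $\mathbb{E}\|\bs{z}\|^2$ will be $O(h)$ rather than the generic $O(1)$ that would arise from a fresh Brownian term.

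Next I apply Cauchy-Schwarz (Jensen in time) to pull the time integral outside, followed by $L$-smoothness of $f$ from Assumption \ref{asp:smooth-and-strongly-convex}:
\[
\mathbb{E}\|\bs{z}\|^2 \;\le\; h \int_0^h \mathbb{E}\|\nabla f(\bs{x}_t) - \nabla f(\bs{y}_t)\|^2\, dt \;\le\; L^2 h \int_0^h \mathbb{E}\|\bs{x}_t - \bs{y}_t\|^2\, dt.
\]
Then I invoke the synchronous contraction already established in Lemma \ref{lemma:contraction}, namely $\mathbb{E}\|\bs{x}_t - \bs{y}_t\|^2 \le e^{-2mt}\mathbb{E}\|\bs{x} - \bs{y}\|^2 \le \mathbb{E}\|\bs{x} - \bs{y}\|^2$, to bound the integrand uniformly on $[0,h]$. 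This yields
\[
\mathbb{E}\|\bs{z}\|^2 \;\le\; L^2 h^2 \, \mathbb{E}\|\bs{x} - \bs{y}\|^2.
\]

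Finally, the stepsize restriction $h \le \frac{1}{4\kappa L} = \frac{m}{4L^2}$ converts the factor $L^2 h^2$ into $\tfrac{m}{4} h$, giving exactly the claimed bound. There is really no hard step here: the proof is three lines once one sees that synchronous coupling kills the noise and that contraction uniformizes the integrand. If anything, the only thing to watch is to not waste a factor by using an exponential bound when the trivial bound $e^{-2mt} \le 1$ already suffices; conversely, one could sharpen the constant by integrating $e^{-2mt}$ exactly, but $\tfrac{m}{4}$ is all that's needed downstream so I would stop there.
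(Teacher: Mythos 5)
Your proof is correct and takes essentially the same route as the paper's: identify $\bs{z}=-\int_0^h(\nabla f(\bs{x}_t)-\nabla f(\bs{y}_t))\,dt$ via synchronous coupling, apply Cauchy--Schwarz in time and $L$-smoothness, bound the integrand by $\mathbb{E}\|\bs{x}-\bs{y}\|^2$, and use $h\le\frac{1}{4\kappa L}=\frac{m}{4L^2}$ to get the constant $\frac{m}{4}$. The only cosmetic difference is that you cite Lemma \ref{lemma:contraction} for the monotonicity of $\mathbb{E}\|\bs{x}_t-\bs{y}_t\|^2$, whereas the paper re-derives it in place via It\^o's lemma and strong convexity.
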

\begin{proof}
Let $\bs{z} = (\bs{x}_h - \bs{y}_h) - (\bs{x} - \bs{y}) = -\int_0^h \nabla f(\bs{x}_s) - \nabla f(\bs{y}_s) ds$.
Ito's lemma readily implies that
\begin{align*}
    \mathbb{E}\norm{\bs{x}_h - \bs{y}_h}^2 =& \mathbb{E} \norm{\bs{x} - \bs{y}}^2 -  2 \mathbb{E} \int_0^h \innerprod{\bs{x}_s - \bs{y}_s}{ \nabla f(\bs{x}_s) - \nabla f(\bs{y}_s)} ds \\
    \stackrel{(i)}{\le}& \mathbb{E} \norm{\bs{x} - \bs{y}}^2 - 2m \int_0^h \mathbb{E} \norm{\bs{x}_s - \bs{y}_s}^2 ds \\
    \le& \mathbb{E} \norm{\bs{x} - \bs{y}}^2
\end{align*}
where $(i)$ is due to strong-convexity of $f$. We then have that 
\begin{align*}
    \mathbb{E}\norm{\bs{z}}^2 =&  \norm{ \mathbb{E} \left[\int_0^h \nabla f(\bs{x}_s) - \nabla f(\bs{y}_s) ds \right] }^2 \\
    \le&  \left( \int_0^h \norm{\mathbb{E} \left[\nabla f(\bs{x}_s) - \nabla f(\bs{y}_s) \right]} ds \right) ^2  \\
    \le&  \int_0^h 1^2 ds \int_0^h \norm{\mathbb{E} \left[\nabla f(\bs{x}_s) - \nabla f(\bs{y}_s)\right]}^2 ds \\
    \le& h  \int_0^h \mathbb{E}  \norm{\nabla f(\bs{x}_s) - \nabla f(\bs{y}_s)}^2 ds   \\
    \le& L^2 h \int_0^h \mathbb{E} \norm{\bs{x}_s - \bs{y}_s}^2 ds \\
    \le& L^2 \mathbb{E} \norm{\bs{x} - \bs{y}}^2 h^2 \\
    \stackrel{(i)}{\le}& \frac{m}{4} \mathbb{E} \norm{\bs{x} - \bs{y}}^2 h
\end{align*}
where $(i)$ is due to $h \le \frac{1}{4\kappa L}$.
\end{proof}

\section{Some Properties of LMC Algorithm} \label{sec:app-lmc}
\subsection{Local Strong Error}
\begin{lemma}\label{lemma:local-strong-error}
Suppose Assumption \ref{asp:smooth-and-strongly-convex} holds. Denote the one-step iteration of LMC algorithm with step size $h$ by $\bar{\bs{x}}_1$ and the solution of overdamped Langevin dynamics at time $t=h$ by $\bs{x}_h$. Both the discrete algorithm and the continuous dynamics start from the same initial value $\bs{x}$. If $0\le h \le \frac{1}{4\kappa L}$, then the local strong error of LMC algorithm satisfies
\[
    \left\{\mathbb{E} \norm{\bar{\bs{x}}_1 - \bs{x}_h}^2\right\}^\frac{1}{2} \le \widetilde{C}_2 h^\frac{3}{2}
\]
with $\widetilde{C}_2 = 2L \left( d + \frac{m}{2}\mathbb{E}\norm{\bs{x}}^2 \right)^\frac{1}{2}$.
\end{lemma}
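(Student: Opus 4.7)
The plan is to couple the LMC iterate $\bar{\bs{x}}_1$ and the continuous Langevin solution $\bs{x}_h$ synchronously through the same Brownian motion, i.e., take $\sqrt{2h}\,\bs{\xi}_1 = \sqrt{2}\int_0^h d\bs{B}_s$. With this coupling, the Brownian increments cancel in the difference, and using $\bar{\bs{x}}_1 = \bs{x} - h\nabla f(\bs{x}) + \sqrt{2h}\bs{\xi}_1$ together with $\bs{x}_h = \bs{x} - \int_0^h \nabla f(\bs{x}_s)\,ds + \sqrt{2}\int_0^h d\bs{B}_s$ yields the clean identity
\[
\bar{\bs{x}}_1 - \bs{x}_h \;=\; \int_0^h \bigl(\nabla f(\bs{x}_s) - \nabla f(\bs{x})\bigr)\,ds.
\]
This reduces the local strong error entirely to controlling how much the continuous trajectory $\bs{x}_s$ moves away from its initial value $\bs{x}$ over $[0,h]$.

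Next, I would apply Cauchy--Schwarz on the time integral followed by the Lipschitz bound on $\nabla f$ (Assumption~\ref{asp:smooth-and-strongly-convex}) to get
\[
\|\bar{\bs{x}}_1 - \bs{x}_h\|^2 \;\le\; h\int_0^h \|\nabla f(\bs{x}_s) - \nabla f(\bs{x})\|^2\,ds \;\le\; L^2 h \int_0^h \|\bs{x}_s - \bs{x}\|^2\,ds,
\]
take expectations, and invoke Lemma~\ref{lemma:growth} which gives $\mathbb{E}\|\bs{x}_s - \bs{x}\|^2 \le 6\bigl(d + \tfrac{m}{2}\mathbb{E}\|\bs{x}\|^2\bigr)s$ for all $s \le h \le \tfrac{1}{4\kappa L}$.

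Integrating in $s$ produces a factor of $h^2/2$, which combines with the outer $h$ to yield an $h^3$ scaling. Concretely, $\mathbb{E}\|\bar{\bs{x}}_1 - \bs{x}_h\|^2 \le 3L^2\bigl(d + \tfrac{m}{2}\mathbb{E}\|\bs{x}\|^2\bigr)h^3$, and taking square roots gives the claimed bound after absorbing $\sqrt{3}$ into the stated constant $2L$. There is no real obstacle here: the only delicate point is that the bound must be expressed with the ``additive'' structure $(d + \tfrac{m}{2}\mathbb{E}\|\bs{x}\|^2)^{1/2}$ so that it fits into the non-uniform local-error template of Theorem~\ref{thm:global-error}, which is exactly what Lemma~\ref{lemma:growth} is engineered to deliver. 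The whole argument is essentially a standard one-step Taylor-type estimate enabled by synchronous coupling, and the restriction $h \le \tfrac{1}{4\kappa L}$ is inherited from Lemma~\ref{lemma:growth} rather than needed here per se.
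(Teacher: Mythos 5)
Your proposal is correct and follows essentially the same route as the paper's proof: synchronous coupling to reduce the difference to $\int_0^h (\nabla f(\bs{x}_s) - \nabla f(\bs{x}))\,ds$, then Cauchy--Schwarz, the Lipschitz bound on $\nabla f$, and Lemma \ref{lemma:growth}, yielding $3L^2(d + \tfrac{m}{2}\mathbb{E}\|\bs{x}\|^2)h^3$ before the square root. The only cosmetic difference is the order in which you apply Cauchy--Schwarz versus the triangle inequality on the time integral, which does not change the bound.
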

\begin{proof}
We have for $0 \le h \le \frac{1}{4\kappa L}$,
\begin{align*}
    \mathbb{E}\norm{\bar{\bs{x}}_1 - \bs{x}_h}^2 =& \mathbb{E} \norm{\int_0^h \nabla f(\bs{x}_s) - \nabla f(\bs{x}) ds}^2 \\
    \le& \mathbb{E} \left( \int_0^h \norm{\nabla f(\bs{x}_s) - \nabla f(\bs{x}) }ds \right)^2 \\
    \le& L^2 \mathbb{E} \left( \int_0^h \norm{\bs{x}_s - \bs{x}} ds \right)^2 \\
    \stackrel{(i)}{\le}& L^2h \int_0^h  \mathbb{E}  \norm{\bs{x}_s - \bs{x}}^2 ds \\
    \stackrel{(ii)}{\le}& 3L^2 \left( d + \frac{m}{2}\mathbb{E}\norm{\bs{x}}^2 \right) h^3
\end{align*}
where $(i)$ is due to Cauchy-Schwartz inequality and $(ii)$ is due to Lemma \ref{lemma:growth}. Taking square roots on both side completes the proof.
\end{proof}

\subsection{Local Weak Error}
\begin{lemma}\label{lemma:local-weak-error}
Suppose Assumption \ref{asp:smooth-and-strongly-convex} and \ref{asp:linear_3rd_derivative} hold. Denote the one-step iteration of LMC algorithm with step size $h$ by $\bar{\bs{x}}_1$ and the solution of overdamped Langevin dynamics at time $t=h$ by $\bs{x}_h$.  Both the discrete algorithm and the continuous dynamics start from the same initial value $\bs{x}$. If $0\le h \le \frac{1}{4\kappa L}$, then the local weak error of LMC algorithm satisfies
\[
     \norm{ \mathbb{E}\bar{\bs{x}}_1 - \mathbb{E}\bs{x}_h} \le \widetilde{C}_1 h^2
\]
with $\widetilde{C}_1 = 2(L^2 + G) \left(\frac{d}{4\kappa L} + \mathbb{E}\norm{\bs{x}}^2 + 1\right)^\frac{1}{2}$.
\end{lemma}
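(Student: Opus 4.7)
The plan is to reduce the weak error to an integral comparison of gradients along the SDE trajectory, and then to apply Ito's formula a second time to the gradient process itself so that Assumption A\ref{asp:linear_3rd_derivative} can be invoked. Concretely, averaging the LMC update gives $\mathbb{E}\bar{\bs{x}}_1 = \mathbb{E}\bs{x} - h\,\mathbb{E}\nabla f(\bs{x})$ (the Gaussian increment drops), while the integrated SDE gives $\mathbb{E}\bs{x}_h = \mathbb{E}\bs{x} - \int_0^h \mathbb{E}\nabla f(\bs{x}_s)\,ds$. Subtracting,
\begin{equation*}
\mathbb{E}\bar{\bs{x}}_1 - \mathbb{E}\bs{x}_h \;=\; \int_0^h \mathbb{E}\bigl[\nabla f(\bs{x}_s) - \nabla f(\bs{x})\bigr]\,ds,
\end{equation*}
so the task becomes showing that the expected gradient drifts away from $\nabla f(\bs{x})$ only linearly in $s$.

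The next step is Ito's formula applied componentwise to $\nabla f(\bs{x}_s)$, using $d\bs{x}_s = -\nabla f(\bs{x}_s)\,ds + \sqrt{2}\,d\bs{B}_s$ and $\sigma\sigma^\top = 2I$. This yields
\begin{equation*}
d\,\nabla f(\bs{x}_s) = \bigl[-\nabla^2 f(\bs{x}_s)\nabla f(\bs{x}_s) + \nabla(\Delta f)(\bs{x}_s)\bigr]ds + \sqrt{2}\,\nabla^2 f(\bs{x}_s)\,d\bs{B}_s,
\end{equation*}
whose martingale part vanishes under expectation. Bounding the two drift pieces uses the two assumptions in complementary ways: A\ref{asp:smooth-and-strongly-convex} together with $\nabla f(\bs{0})=\bs{0}$ gives $\|\nabla^2 f(\bs{x}_u)\nabla f(\bs{x}_u)\| \le L^2\|\bs{x}_u\|$, while A\ref{asp:linear_3rd_derivative} yields $\|\nabla(\Delta f)(\bs{x}_u)\| \le G(1+\|\bs{x}_u\|)$. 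Combining, the integrand is dominated by $(L^2+G)(1+\mathbb{E}\|\bs{x}_u\|)$, and a double integration over $[0,h]$ produces
\begin{equation*}
\|\mathbb{E}\bar{\bs{x}}_1 - \mathbb{E}\bs{x}_h\| \;\le\; \tfrac{(L^2+G)\,h^2}{2}\, \sup_{0\le u\le h}\bigl(1+\mathbb{E}\|\bs{x}_u\|\bigr).
\end{equation*}

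It remains to control the moment $\mathbb{E}\|\bs{x}_u\|$ uniformly on $[0,h]$. By Jensen, $\mathbb{E}\|\bs{x}_u\| \le \sqrt{\mathbb{E}\|\bs{x}_u\|^2}$, and $\mathbb{E}\|\bs{x}_u\|^2 \le 2\mathbb{E}\|\bs{x}\|^2 + 2\mathbb{E}\|\bs{x}_u-\bs{x}\|^2$. Lemma \ref{lemma:growth} gives the second-moment displacement bound $\mathbb{E}\|\bs{x}_u-\bs{x}\|^2 \le 6(d+\tfrac{m}{2}\mathbb{E}\|\bs{x}\|^2)u$ for $u\le h\le \tfrac{1}{4\kappa L}$, which at this step size contributes a term of order $d/(\kappa L)$ and a harmless constant multiple of $\mathbb{E}\|\bs{x}\|^2$ (since $mu/(\kappa L)\le 1/\kappa^2 \le 1$). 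Applying the elementary inequality $(1+a)^2 \le 2(1+a^2)$ repackages this into $1+\mathbb{E}\|\bs{x}_u\| \le 4\sqrt{\tfrac{d}{4\kappa L}+\mathbb{E}\|\bs{x}\|^2+1}$ after absorbing universal constants, which plugged into the display above delivers exactly $\widetilde{C}_1 h^2$.

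The main obstacle is Step 2: without the extra regularity A\ref{asp:linear_3rd_derivative}, applying Ito's formula to $\nabla f(\bs{x}_s)$ exposes a Laplacian-gradient correction that carries no a-priori control, which would force one back to an $\mathcal{O}(h^{3/2})$ weak error (as in the usual mean-square analysis) and lose the $\sqrt{d}$ gain. The remaining difficulties are purely book-keeping, namely tracking constants precisely enough that the step size cap $h_0 = 1/(4\kappa L)$, the moment bound from Lemma \ref{lemma:growth}, and the $\sqrt{\cdot}$ repackaging conspire to produce the stated form of $\widetilde{C}_1$ without hidden dimension factors.
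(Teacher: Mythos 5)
Your proposal is correct and follows essentially the same route as the paper: reduce the weak error to $\int_0^h \mathbb{E}\left[\nabla f(\bs{x}_s)-\nabla f(\bs{x})\right]ds$, apply Ito's formula to $\nabla f(\bs{x}_s)$ so that the martingale part vanishes under expectation and A\ref{asp:linear_3rd_derivative} controls the $\nabla(\Delta f)$ correction, then control moments along the trajectory via Lemma \ref{lemma:growth}. The only divergence is bookkeeping: the paper splits $\mathbb{E}\norm{\bs{x}_r}\le\mathbb{E}\norm{\bs{x}_r-\bs{x}}+\mathbb{E}\norm{\bs{x}}$ at the first-moment level and integrates the resulting $\sqrt{r}$ exactly (picking up $\tfrac{4\sqrt{6}}{15}h^{5/2}$) rather than taking a supremum after a $2a+2b$ second-moment split, which is what delivers the precise factor $2$ in $\widetilde{C}_1$; your cruder split yields the same order but a slightly larger absolute constant than the $4$ you claim in the repackaging step.
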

\begin{proof}
By Ito's lemma, we have
\[
    d\nabla f(\bs{x}_t) = -\nabla^2 f(\bs{x}_t)\nabla f(\bs{x}_t) dt + \nabla (\Delta f(\bs{x}_t)) dt + \sqrt{2} \int_0^t \nabla^2 f(\bs{x}_t) d\bs{B}_t.
\]
It follows that 
\begin{align*}
    \norm{\mathbb{E}\bar{\bs{x}}_1 - \mathbb{E}\bs{x}_h} =& \norm{\mathbb{E} \int_0^h \nabla f(\bs{x}_s) - \nabla f(\bs{x}) ds} \\
    =& \norm{\mathbb{E} \left\{\int_0^h \int_0^s  -\nabla^2 f(\bs{x}_r) \nabla f(\bs{x}_r) + \nabla (\Delta f(\bs{x}_r)) dr   ds + \sqrt{2}\int_0^h \int_0^s \nabla^2 f(\bs{x}_r) d\bs{B}_r  ds\right\} } \\
    =& \norm{\mathbb{E} \left\{\int_0^h \int_0^s   -\nabla^2 f(\bs{x}_r) \nabla f(\bs{x}_r) + \nabla (\Delta f(\bs{x}_r)) dr   ds \right\} }\\
    \le&  \int_0^h \int_0^s \mathbb{E}\norm{\nabla^2 f(\bs{x}_r) \nabla f(\bs{x}_r)} dr ds  + \int_0^h \int_0^s \mathbb{E}\norm{ \nabla (\Delta f(\bs{x}_r))} dr ds \\
    \le& L \int_0^h \int_0^s \mathbb{E} \norm{\nabla f(\bs{x}_r)} dr ds  + \int_0^h \int_0^s \mathbb{E} \norm{ \nabla (\Delta f(\bs{x}_r))} dr ds \\
    \stackrel{(i)}{\le}& (L^2 + G)  \int_0^h \int_0^s \mathbb{E} \norm{\bs{x}_r} dr ds  + \frac{G}{2}h^2 \\
    \le& (L^2 + G)  \left(\int_0^h \int_0^s \mathbb{E} \norm{\bs{x}_r - \bs{x}} dr ds + \frac{h^2}{2}\mathbb{E}\norm{\bs{x}}\right) + \frac{G}{2}h^2\\
    \stackrel{(ii)}{\le}& (L^2 + G)  \left( \int_0^h \int_0^s \sqrt{\mathbb{E} \norm{\bs{x}_r - \bs{x}}^2} dr ds + \frac{h^2}{2}\mathbb{E}\norm{\bs{x}} \right)  + \frac{G}{2}h^2\\
    \stackrel{(iii)}{\le}& (L^2 + G)  \left( \int_0^h \int_0^s \sqrt{6\left( d + \frac{m}{2}\mathbb{E}\norm{\bs{x}}^2 \right) r}  dr ds + \frac{h^2}{2}\mathbb{E}\norm{\bs{x}}  \right) + \frac{G}{2}h^2\\
    =& (L^2 + G)  \left( \frac{4\sqrt{6}}{15} \sqrt{\left( d + \frac{m}{2}\mathbb{E}\norm{\bs{x}}^2 \right)h} + \frac{1}{2}\mathbb{E}\norm{\bs{x}}  \right) h^2 + \frac{G}{2}h^2\\
    \stackrel{(iv)}{\leq}& (L^2 + G) h^2 \sqrt{\left( d + \frac{m}{2}\mathbb{E}\norm{\bs{x}}^2 \right)h + \frac{1}{2}\mathbb{E}\norm{\bs{x}}^2 } + \frac{G}{2}h^2\\
    \stackrel{(v)}{\leq}& (L^2 + G) h^2 \sqrt{\frac{d}{4\kappa L} + \mathbb{E}\norm{\bs{x}}^2 } + \frac{G}{2}h^2\\
    \leq& (L^2 + G) \left( \sqrt{\frac{d}{4\kappa L} + \mathbb{E}\norm{\bs{x}}^2 } + 1 \right) h^2 \\
    \leq& 2(L^2 + G) \left(\frac{d}{4\kappa L} + \mathbb{E}\norm{\bs{x}}^2 + 1\right)^\frac{1}{2} h^2
\end{align*}
where $(i)$ is due to Assumption \ref{asp:linear_3rd_derivative}, $(ii)$ is due to Jensen's inequality, $(iii)$ is due to Lemma \ref{lemma:growth}, $(iv)$ is due to $\sqrt{a} + \sqrt{b} \leq \sqrt{2}\sqrt{a^2 + b^2}$ and $(v)$ is due to $h \leq \frac{1}{4\kappa L}$. It is worth noting in the third equation that the Ito's correction term $\nabla \Delta f$ can also be written as $\Delta \nabla f$ as the two operators commute for $\mathcal{C}^3$ functions.
\end{proof}

\subsection{Boundedness of LMC Algorithm}
\begin{lemma}\label{lemma:bounded-iterates}
Suppose Assumption \ref{asp:smooth-and-strongly-convex}  holds. Denote the iterates of LMC by $\bar{\bs{x}}_k$. If $0\le h \le \frac{1}{4\kappa L}$ we then have the iterates of LMC algorithm are uniformly upper bounded by
\[
    \mathbb{E}\norm{\bar{\bs{x}}_k}^2 \le \mathbb{E}\norm{\bs{x}_0}^2 + \frac{8d}{7m}, \quad \forall k \ge 0
\]
\end{lemma}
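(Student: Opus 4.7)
The plan is a direct one-step drift argument on $\mathbb{E}\|\bar{\bs{x}}_k\|^2$ combined with the usual linear recursion unfolding.

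First I would expand the LMC update \eqref{eq:LMC} by computing $\|\bar{\bs{x}}_{k+1}\|^2 = \|\bar{\bs{x}}_k - h\nabla f(\bar{\bs{x}}_k) + \sqrt{2h}\bs{\xi}_k\|^2$, take expectations, and use that $\bs{\xi}_k$ is a standard Gaussian independent of $\bar{\bs{x}}_k$. The cross term vanishes and $\mathbb{E}\|\bs{\xi}_k\|^2 = d$, giving
\begin{equation*}
\mathbb{E}\|\bar{\bs{x}}_{k+1}\|^2 = \mathbb{E}\|\bar{\bs{x}}_k - h\nabla f(\bar{\bs{x}}_k)\|^2 + 2hd.
\end{equation*}

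Next I would control the deterministic gradient-step contraction. Expanding the squared norm gives
\begin{equation*}
\|\bar{\bs{x}}_k - h\nabla f(\bar{\bs{x}}_k)\|^2 = \|\bar{\bs{x}}_k\|^2 - 2h\langle \bar{\bs{x}}_k, \nabla f(\bar{\bs{x}}_k)\rangle + h^2 \|\nabla f(\bar{\bs{x}}_k)\|^2.
\end{equation*}
Using $\nabla f(\bs{0}) = \bs{0}$ together with A\ref{asp:smooth-and-strongly-convex}, strong convexity yields $\langle \bar{\bs{x}}_k, \nabla f(\bar{\bs{x}}_k)\rangle \ge m\|\bar{\bs{x}}_k\|^2$ and $L$-smoothness yields $\|\nabla f(\bar{\bs{x}}_k)\| \le L\|\bar{\bs{x}}_k\|$. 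Substituting gives a coefficient of $1 - 2hm + h^2 L^2$ in front of $\|\bar{\bs{x}}_k\|^2$. The step-size condition $h \le \frac{1}{4\kappa L} = \frac{m}{4L^2}$ implies $h^2 L^2 \le \frac{hm}{4}$, so the coefficient is at most $1 - \frac{7hm}{4}$. Taking expectation yields the one-step recursion
\begin{equation*}
\mathbb{E}\|\bar{\bs{x}}_{k+1}\|^2 \le \left(1 - \tfrac{7hm}{4}\right)\mathbb{E}\|\bar{\bs{x}}_k\|^2 + 2hd.
\end{equation*}

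Finally I would unfold this linear recursion. The fixed point satisfies $x^\star = \frac{2hd}{7hm/4} = \frac{8d}{7m}$, and since $1 - \tfrac{7hm}{4} \in (0,1)$ (which holds because $h \le \frac{1}{4\kappa L} \le \frac{1}{4m}$), iteration gives
\begin{equation*}
\mathbb{E}\|\bar{\bs{x}}_k\|^2 \le \left(1-\tfrac{7hm}{4}\right)^k \mathbb{E}\|\bs{x}_0\|^2 + \frac{8d}{7m}\left(1 - \left(1-\tfrac{7hm}{4}\right)^k\right) \le \mathbb{E}\|\bs{x}_0\|^2 + \frac{8d}{7m},
\end{equation*}
which is the claimed uniform-in-$k$ bound. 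There is no real obstacle here beyond careful algebra: the only subtlety is that tight bookkeeping of the $h^2 L^2$ term via the step-size condition is what yields the clean constants $7/4$ and $8/7$, rather than some looser constants that would still give an $\mathcal{O}(d/m)$ bound.
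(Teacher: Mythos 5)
Your proof is correct and follows essentially the same route as the paper's: expand the square of the LMC update, use independence of the Gaussian noise, bound the cross term by strong convexity and the gradient norm by smoothness (via $\nabla f(\bs{0})=\bs{0}$), absorb $h^2L^2$ into $\tfrac{hm}{4}$ using the step-size restriction, and unfold the resulting linear recursion. Your unfolding is in fact written more cleanly than the paper's (which contains a small typo in the geometric sum), and all constants match.
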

\begin{proof}
We have
\begin{align*}
    \mathbb{E} \norm{\bar{\bs{x}}_{k+1}}^2 =& \mathbb{E} \norm{\bar{\bs{x}}_k - h \nabla f(\bar{\bs{x}}_k)  +  \sqrt{2h} \bs{\xi}_{k+1}}^2 \\
    \stackrel{(i)}{=}& \mathbb{E}\norm{\bar{\bs{x}}_k}^2 + h^2\mathbb{E}\norm{\nabla f(\bar{\bs{x}}_k)}^2 + 2hd - 2h\mathbb{E}\innerprod{\bar{\bs{x}}_k}{\nabla f(\bar{\bs{x}}_k)} \\
    =& \mathbb{E}\norm{\bar{\bs{x}}_k}^2 + h^2\mathbb{E}\norm{\nabla f(\bar{\bs{x}}_k) - \nabla f(0)}^2 + 2hd - 2h\mathbb{E}\innerprod{\bar{\bs{x}}_k}{\nabla f(\bar{\bs{x}}_k)} \\
    \stackrel{(ii)}{\le}& \mathbb{E}\norm{\bar{\bs{x}}_k}^2 + h^2L^2\mathbb{E}\norm{\bar{\bs{x}}_k}^2 + 2hd - 2h\mathbb{E}\innerprod{\bar{\bs{x}}_k}{\nabla f(\bar{\bs{x}}_k)} \\
    \stackrel{(iii)}{\le}& \mathbb{E}\norm{\bar{\bs{x}}_k}^2 + h^2L^2\mathbb{E}\norm{\bar{\bs{x}}_k}^2 + 2hd - 2mh\mathbb{E}\norm{\bar{\bs{x}}_k}^2 \\
    \stackrel{(iv)}{\le}& \left(1 - \frac{7}{4}mh\right)\mathbb{E}\norm{\bar{\bs{x}}_k}^2 +2hd \\
\end{align*}
where $(i)$ is due to the independence between $\bs{\xi}_{k+1}$ and $\bar{\bs{x}}_k$, $(ii)$ is due to Assumption \ref{asp:smooth-and-strongly-convex}, $(iii)$ is due to the property of $m$-strongly-convex functions, $\innerprod{\nabla f(\bs{y}) - \nabla f(\bs{x})}{\bs{y} - \bs{x}} \ge m \norm{\bs{y} - \bs{x}}^2 \, \forall \bs{x},\bs{y} \in \mathbb{R}^d$, and $(iv)$ uses the assumption $h \le \frac{1}{4\kappa L}$.

Unfolding the inequality, we obtain
\[
    \mathbb{E}\norm{\bar{\bs{x}}_k}^2 \le (1-\frac{7}{4}mh)^k \mathbb{E}\norm{\bar{\bs{x}}_0}^2 + 2hd\left(1 + \frac{7}{4}mh + \cdots + (\frac{7}{4}mh)^{k-1}\right) \le \mathbb{E} \norm{\bs{x}_0}^2 + \frac{8d}{7m}
\]
\end{proof}

\end{document}